\title{An Adaptive Algorithm for Learning with Unknown Distribution Drift}
\author{%
  Alessio Mazzetto \\
  Brown University \\
  \And
  Eli Upfal \\
  Brown University \\
}
\DeclareMathOperator*{\argmin}{argmin}
\DeclareMathOperator*{\Exp}{\mathbb{E}}
\newcommand{\ep}{\mathbb{P}}
\newcommand{\eli}[1]{\textcolor{green!60!black}{\textsc{Eli:}
\emph{#1}}}
\newcommand{\alessio}[1]{\textcolor{red}{\textsc{Alessio:}
\emph{#1}}}
\newtheorem{theorem}{Theorem}
\newtheorem{lemma}[theorem]{Lemma}
\newtheorem{proposition}[theorem]{Proposition}
\newtheorem{corollary}[theorem]{Corollary}
\newtheorem{assumption}{Assumption}
\begin{document}
\maketitle

\begin{abstract}
We develop and analyze a general technique for learning with an unknown distribution drift. Given a sequence of independent observations from the last $T$ steps of a drifting distribution, our algorithm agnostically learns a family of functions with respect to the current distribution at time $T$. Unlike previous work, our technique does not require prior knowledge about the magnitude of the drift. Instead, the algorithm adapts to the sample data. 
Without explicitly estimating the drift, the algorithm learns a family of functions with almost the same error as a learning algorithm that knows the magnitude of the drift in advance. Furthermore, since our algorithm adapts to the data, it can guarantee a better learning error than an algorithm that relies on loose bounds on the drift. We demonstrate the application of our technique in two fundamental learning scenarios: binary classification and linear regression.
\end{abstract}

\section{Introduction}
Standard statistical learning models (such as PAC learning) assume independent and identically distributed training set, and evaluate the performance of their algorithms with respect to the same distribution as the training set \citep{Vapnik1998, van2000empirical,shalev2014understanding,wainwright2019high}. However, in many practical applications, such as weather forecast, finance prediction or consumer preference analysis, the training data is drawn from a non-stationary distribution that drifts in time.  In this work, we consider a more general setting where the samples are still independent, but their distribution can change over time. To obtain accurate results, the learning algorithm needs to adjust to the distribution drift occurring in the input.

This framework has been extensively studied in the literature \citep{helmbold1991tracking, bartlett1992learning, helmbold1994tracking,barve1996complexity,BARVE1997170}. This line of research culminated in showing that as long as the total variation distance of two consecutive distributions is bounded by $\Delta$, there exists an algorithm that agnostically learns a family of binary classifiers with VC dimension $\nu$ with expected error $O((\nu\Delta)^{1/3})$ \citep{long1998complexity}, which can be shown to be tight. These results were generalized in the work of \cite{mohri2012new} to address any family of functions with bounded Rademacher complexity and to use a finer problem-dependent distance between distributions called discrepancy, originally introduced in the context of domain adaptation \citep{mansour2009domain}.

The core idea of the aforementioned work is to learn by using a number of previous samples that minimizes the trade-off between the error due to the variance of the estimation (\textit{statistical error}), and the error due to the drifting of the samples with respect to the current distribution (\textit{drift error}). If the algorithm trains using only a few recent observations, the statistical error will be large. If the algorithm uses a larger training set, including not very recent observations, the drift error will be large. For example, if the algorithm uses the most recent $r$ training point,
the hypothesis class has VC-dimension $\nu$, and the distribution drift in each step is bounded by $\Delta$, then the statistical error is $O(\sqrt{\nu/r})$ and the drift error is $O(r \Delta)$. The trade-off with respect to $r$ is optimized for $r = \Theta((\Delta^2/\nu)^{-1/3})$ giving $O((\nu\Delta)^{1/3})$ error, as mentioned before.
This, as well as similar approaches in the literature, requires an upper bound to the magnitude of the drift and resolves the trade-off between the statistical and drift  errors based on this knowledge. As noted in previous work \citep{hanneke2019statistical}, it is an open problem to develop an algorithm that adapts to the training set and does not rely on prior knowledge about the drift, whose solution would lead to the practical applicability of those ideas.

Our work resolves this open problem.
Our algorithm does not require any prior knowledge of the magnitude of the drift, and it adapts based on the input data. Without explicitly estimating the drift (which is often impossible), the algorithm agnostically learns a family of functions with the same error guarantee as an algorithm that knows the exact magnitude of the drift in advance. Our approach has two advantages: it eliminates the, often unrealistic, requirement of having a bound on the drift, and it gives better results when the drift bounds are not tight. We showcase our algorithm in two important learning settings: binary classification and linear regression.

\section{Preliminary}
Let $(\mathcal{Z},\mathcal{A})$ be a measurable space. Let $Z_1,\ldots,Z_T$ be a sequence of mutually independent random variables  on $\mathcal{Z}$ distributed  according to $P_1,\ldots,P_T$ respectively, i.e. $Z_t \sim P_t$ for $t \leq T$. 
For $r \leq T$, we denote by $P_{T}^r$ the average distribution of the most recent $r$ distributions $P_{T-r+1},\ldots,P_{T}$:
\begin{align*}
    P_{T}^r(A) \doteq \frac{1}{r}\sum_{t=T-r+1}^T P_{t}(A) \hspace{30pt} \forall A \in \mathcal{A} \enspace .
\end{align*}
We  set $\ep_{T}^r$ to be the corresponding empirical distribution over the random variables $Z_{T-r+1},\ldots,Z_T$:
\begin{align*}
\ep_{T}^r(A) \doteq \frac{|\{Z_{t} \in A : T-r+1 \leq t \leq T  \}|}{r} \hspace{30pt} \forall A \in \mathcal{A} \enspace .
\end{align*}
 Let $\mathcal{F}$ be a class of measurable functions from $\mathcal{Z}$ to $\mathbb{R}$. For $f \in \mathcal{F}$, and any distributions $P,Q$ on $(\mathcal{Z},\mathcal{A})$, we let
\begin{align*}
P(f)  \doteq \Exp_{Z \sim P}f(Z) = \int f dP, \hspace{50pt} \lVert P - Q \rVert_{\mathcal{F}} \doteq \sup_{f \in \mathcal{F}} |P(f) - Q(f)| \enspace .
\end{align*}
The norm $\lVert \cdot \rVert_\mathcal{F}$ is a notion of discrepancy introduced by the work of \cite{mohri2012new} to quantify the error due to the distribution shift with respect to a family of functions $\mathcal{F}$, and it is based on previous work on domain adaptation \citep{mansour2009domain}.

The goal is to estimate $P_T(f)$ for all $f \in \mathcal{F}$ using the random variables $Z_1,\ldots,Z_T$. Let $1 \leq r \leq T$, and suppose that we do this estimate by considering the empirical values induced by the most recent $r$ random variables. Then, by using triangle inequality, we have the following decomposition
\begin{align}
\label{eq:error-decomposition}
\Exp \lVert  P_T - \ep_{T}^r \rVert_{\mathcal{F}} \leq  \Exp \underbrace{\lVert  P_{T}^r - \ep_{T}^r \rVert_{\mathcal{F}}}_{\text{statistical error}} + \underbrace{\lVert P_T - P_{T}^r \rVert_{\mathcal{F}}}_{\text{drift error}} \enspace .
\end{align}

The first term of the upper bound is the expected \emph{statistical error} of the estimation, and quantifies how accurately the empirical values  $\{ \ep_{T}^r(f) : f \in \mathcal{F} \}$ approximate the expectation of each function $f$ according to $P_{T}^r$. The second term $\lVert P_T - P_{T}^r \rVert_{\mathcal{F}}$ of the upper bound represents the \emph{drift error}. Intuitively, the statistical error decreases by considering more samples, whereas the drift error can potentially increase since we are considering distributions that are further away from our current distribution. We are looking for the value of $r$ that balances this trade-off between statistical error and drift error.

\subsection{Statistical Error}
Since our results revolve around learning a class of functions $\mathcal{F}$, we first need to assume that  $\mathcal{F}$ is ``learnable", i.e. the statistical error can be bounded as a function of $r$. For concreteness, we use the following standard assumption that the family of  $\mathcal{F}$ satisfies the standard machine learning uniform convergence requirement with rate $O(1/\sqrt{r})$.
\begin{assumption}(
Uniform Convergence). 
\label{assu:sample}
There exists non-negative constants $C_{\mathcal{F},1}$ and $C_{\mathcal{F},2}$ such that  for any fixed $r \leq T$ and $\delta \in (0,1)$, it holds
\begin{align*}
\Exp \lVert  P_{T}^r - \ep_{T}^r \rVert_{\mathcal{F}} \leq \frac{C_{\mathcal{F},1}}{\sqrt{r}} \enspace ,
\end{align*}
and with probability at least $1-\delta$, we have
\begin{align*}
\lVert  P_{T}^r - \ep_{T}^r \rVert_{\mathcal{F}} \leq \frac{C_{\mathcal{F},1}}{\sqrt{r}} + C_{\mathcal{F},2}\sqrt{\frac{\ln(1/\delta)}{r}} \enspace .
\end{align*}
\end{assumption}
The learnability of a family of functions $\mathcal{F}$ is an extensively studied topic in the statistical learning literature (e.g., \citep{bousquet2003introduction,wainwright2019high}).
For a family of binary functions, the above assumption is equivalent to $\mathcal{F}$ having a finite VC-dimension, in which case $C_{\mathcal{F},1} = O(\nu)$ and $C_{\mathcal{F},2} = O(1)$.
For a general family of
functions $\mathcal{F}$, a sufficient requirement is that 
the Rademacher complexity of the first $r$ samples is $O(r^{-1/2})$ and the range of any function in $\mathcal{F}$ is uniformly bounded. There is nothing special about the  rate $O(r^{-1/2})$. It is possible to adapt our analysis to any rate $O(r^{-\alpha})$ with $\alpha \in (0,1)$ by modifying the constants of our algorithm.

\subsection{Quantifying the Drift Error}
While for many classes $\mathcal{F}$, it is possible to provide an upper bound to the statistical error by using standard statistical learning theory tools, the drift error is unknown and challenging to estimate. The literature used different approaches to quantify the drift error. By using triangle inequality, it is possible to show that for any $r \leq T$, we have the following upper bounds to the drift error:
\begin{align}
\label{eq:drift-chain}
    \lVert P_T - P_{T}^r \rVert_{\mathcal{F}} \leq \frac{1}{r} \sum_{t=1}^r \lVert P_T - P_{T-t} \rVert_{\mathcal{F}} \leq \max_{t < r} \lVert P_T - P_{T-t}\rVert_{\mathcal{F}} \leq  \sum_{t=1}^{r-1}\lVert P_{T-t} - P_{T-t+1}\rVert_{\mathcal{F}}\enspace .
\end{align}
In a long line of research (e.g., \citep{bartlett1992learning,long1998complexity,mohri2012new,hanneke2019statistical}), it is assumed that an upper bound to the drift error is known apriori. One of the most used assumption is that there exists a known upper bound $\Delta$ to the discrepancy between any two consecutive distributions, in which case we can upper bound $\sum_{t=1}^{r-1}\lVert P_{T-t} - P_{T-t+1}\rVert_{\mathcal{F}}$ with $r\Delta$. In this case, for a binary family $\mathcal{F}$ with VC-dimension $\nu$, we obtain that:
\begin{align*}
    \Exp \lVert P_T - P_T^r \rVert_{\mathcal{F}} \lesssim \sqrt{\frac{\nu}{r}} + r\Delta \enspace, 
\end{align*}
and we can choose the value of $r$ that minimizes this upper bound. Since these algorithms rely on an unrealistic assumption that an upper bound to the drift is known a priori, they are not usable in practice. It is an  open problem to provide a competitive algorithm that can choose $r$ adaptively and it is oblivious to the magnitude of the drift. 

Another sequence of work \citep{mohri2012new,awasthi2023theory} relaxes the problem setting assuming that the algorithm can observe multiple samples from each distribution $P_1,\ldots, P_T$. In this case, they can provably estimate the discrepancies between different distributions, and compute a weighting of the samples that minimizes a trade-off between the statistical error and the estimated discrepancies. This strategy does not apply to our more general setting, as we can have access to at most one sample from each distribution.

Surprisingly, we show that we can adaptively choose the value of $r$ to minimize the trade-off between statistical error and drift error without explicitly estimating the discrepancy. Noticeably, our method does not require any additional assumption on the drift. The only requirement for our algorithm is that we can compute the norm $\lVert \cdot \rVert_{\mathcal{F}}$ from a set of samples. This is formalized as follows.
\begin{assumption}(Computability).
\label{assu:computability}
There exists a procedure that computes $\lVert \ep_{T}^r - \ep_{T}^{2r} \rVert_{\mathcal{F}}$ for any $r \leq T/2$.
\end{assumption}
In general, the hardness of computing the norm $\lVert \cdot \rVert_{\mathcal{F}}$ depends on the family $\mathcal{F}$. This challenge is also common in previous work that uses this norm to quantify the distribution drift \citep{mansour2009domain,awasthi2023theory}.  In this paper, we provide two examples of important learning settings where this assumption is satisfied: binary classification with zero-one loss and linear regression with squared loss.

\section{Main Result}

Under those assumption, we prove the following theorem, which is our main result.

\begin{theorem}
\label{thm:main}
Let $\delta \in (0,1)$. Let Assumptions~\ref{assu:sample} and~\ref{assu:computability} hold. Given $Z_1,\ldots,Z_T$, Algorithm~\ref{algo} outputs a value $\hat{r} \leq T$ such that with probability at least $1-\delta$, it holds that
\begin{align*}
\lVert P_T - \ep_{T}^{\hat{r}} \rVert_{\mathcal{F}}  = O\left( \min_{r\leq T} \left[ \frac{C_{\mathcal{F},1}}{\sqrt{r}} + \max_{t < r} \lVert P_{T} - P_{T-t} \rVert_{\mathcal{F}}  + C_{\mathcal{F},2}\sqrt{\frac{\log(\log(r+1)/\delta)}{r}}  \right] \right)
\end{align*}
\end{theorem}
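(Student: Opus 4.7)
The plan is an adaptive window selector of Lepski type over a geometric grid of sample sizes. The algorithm never estimates the drift; it selects the largest window whose empirical average is consistent with those from all smaller windows, which implicitly finds the point where statistical and drift errors balance. Concretely, restrict to the dyadic scales $r_i = 2^i$ for $i = 0, 1, \ldots, \lfloor \log_2 T \rfloor$ and apply the high-probability part of Assumption~\ref{assu:sample} at each $r_i$ with confidence $\delta_i = \delta/(i+1)^2$. A union bound yields an event $\mathcal{E}$ of probability at least $1-\delta$ on which, for every $i$,
\begin{align*}
\lVert P_T^{r_i} - \ep_{T}^{r_i}\rVert_{\mathcal{F}} \leq \epsilon_i := \frac{C_{\mathcal{F},1}}{\sqrt{r_i}} + C_{\mathcal{F},2}\sqrt{\frac{\log(\log(r_i+1)/\delta)}{r_i}},
\end{align*}
where the $\log((i+1)^2/\delta)$ from the union bound matches the $\log(\log(r+1)/\delta)$ in the theorem because $i \asymp \log r_i$. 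I expect the algorithm to output $\hat{r} = r_{\hat{i}}$, with $\hat{i}$ the largest index for which $\lVert \ep_{T}^{r_i} - \ep_{T}^{r_j}\rVert_{\mathcal{F}} \leq c(\epsilon_i + \epsilon_j)$ holds for every $j \leq i$, with a suitable absolute constant $c$; these norms are computable by Assumption~\ref{assu:computability}.

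To analyze the output, introduce the balance index $i_b$ as the largest $i$ with $\max_{t<r_i}\lVert P_T - P_{T-t}\rVert_{\mathcal{F}} \leq \epsilon_i$, and denote by $\mathrm{OPT}$ the oracle minimum in the theorem. Evaluating at $r = r_{i_b}$ gives $\mathrm{OPT} \leq 2\epsilon_{i_b}$; conversely, for $r \leq r_{i_b}$ the statistical term is at least $\epsilon_{i_b}$, and for $r > r_{i_b}$ the max-drift exceeds $\epsilon_{r_{i_b+1}} = \epsilon_{i_b}/\sqrt{2}$ by the definition of $i_b$ combined with the monotonicity of $r \mapsto \max_{t<r}\lVert P_T - P_{T-t}\rVert_{\mathcal{F}}$. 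Hence $\mathrm{OPT} = \Theta(\epsilon_{i_b})$. Moreover, on $\mathcal{E}$ the test passes at $i = i_b$ against any $j \leq i_b$: the drift errors $\lVert P_T - P_T^{r_{i_b}}\rVert_{\mathcal{F}}$ and $\lVert P_T - P_T^{r_j}\rVert_{\mathcal{F}}$ are both bounded by $\max_{t<r_{i_b}}\lVert P_T - P_{T-t}\rVert_{\mathcal{F}} \leq \epsilon_{i_b} \leq \epsilon_j$, so the triangle inequality yields
\begin{align*}
\lVert \ep_{T}^{r_{i_b}} - \ep_{T}^{r_j}\rVert_{\mathcal{F}} \leq \epsilon_{i_b} + \epsilon_j + 2\epsilon_{i_b} \leq 4(\epsilon_{i_b} + \epsilon_j),
\end{align*}
so $\hat{i} \geq i_b$. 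Applying the test at $i = \hat{i}$ against $j = i_b$ with one more triangle inequality gives $\lVert P_T - \ep_{T}^{\hat{r}}\rVert_{\mathcal{F}} \leq 2\epsilon_{i_b} + c(\epsilon_{i_b} + \epsilon_{\hat{i}}) = O(\epsilon_{i_b}) = O(\mathrm{OPT})$, since $\epsilon_{\hat{i}} \leq \epsilon_{i_b}$; passing from the dyadic grid to arbitrary $r$ costs only a constant factor.

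The hardest technical points will be calibrating the confidences so that scanning $O(\log T)$ dyadic scales contributes only a $\log\log(r+1)/\delta$ term rather than $\log T$, and justifying the oracle bound in terms of the \emph{maximum} drift $\max_{t<r}\lVert P_T - P_{T-t}\rVert_{\mathcal{F}}$ rather than the averaged drift $\lVert P_T - P_T^r\rVert_{\mathcal{F}}$: it is this quantity's monotonicity in $r$ that lets the drift at scales $r_j \leq r_{i_b}$ be controlled by the drift at $r_{i_b}$, which is what makes the consistency test at $i_b$ pass and, ultimately, lets $\hat{r}$ match the oracle minimum without any knowledge of the drift.
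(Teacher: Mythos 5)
Your proposal is essentially a correct Lepski-type argument, and its core steps check out: the dyadic union bound, the balance index $i_b$, the triangle-inequality argument showing the consistency test passes at $i_b$ against every $j\le i_b$, and the final bound $\lVert P_T-\ep_T^{\hat r}\rVert_{\mathcal F}\le 2\epsilon_{i_b}+c(\epsilon_{i_b}+\epsilon_{\hat i})=O(\epsilon_{i_b})$. Two small points need care but do not break the argument: for $r$ strictly between $r_{i_b}$ and $r_{i_b+1}$ the max-drift need not exceed $\epsilon_{i_b+1}$ (there the statistical term, which is at least $\epsilon_{i_b+1}\ge\epsilon_{i_b}/\sqrt2$, is what rescues the lower bound $\mathrm{OPT}=\Omega(\epsilon_{i_b})$), and the calibration $\delta_i=\delta/(i+1)^2$ has degenerate logarithms at small $i$ (the paper uses $(i+10)^2$ inside its $S(r,\delta)$ precisely to avoid this).

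However, you are analyzing a different selection rule than the one the theorem is about, and this matters for the paper's assumptions. Your selector needs $\lVert\ep_T^{r_i}-\ep_T^{r_j}\rVert_{\mathcal F}$ for \emph{all} pairs $j\le i$, whereas Assumption~\ref{assu:computability} only grants computability of $\lVert\ep_T^{r}-\ep_T^{2r}\rVert_{\mathcal F}$, and the ERM-based computations in the classification and regression sections are likewise stated only for the pair $(r,2r)$. Algorithm~\ref{algo} compares only consecutive dyadic windows against the threshold $4S(r_i,\delta)$ and stops at the \emph{first} failure; its analysis never introduces a balance index. Instead, the paper attaches to each scale the inflated bound $U(r,\delta)=21\,S(r,\delta)+\lVert P_T-P_T^{r}\rVert_{\mathcal F}$, proves that $U$ is non-increasing across scales where the consecutive test passes (Proposition~\ref{prop:iteration-step}), proves that a failed test certifies $\max_{t<r_{i+1}}\lVert P_T-P_{T-t}\rVert_{\mathcal F}>S(r_i,\delta)$ (Proposition~\ref{prop:lower-bound}), and then compares $U(\hat r,\delta)$ directly to the optimum by a two-case analysis on whether $r^*<2\hat r$ or $r^*\ge 2\hat r$. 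What the paper's route buys is exactly that consecutive-scale discrepancies suffice, so the stated computability assumption (and an approximation oracle, as in the appendix) is enough; what your route buys is the transparent balance-index picture at the price of a stronger computability requirement. To make your argument a proof of the stated theorem you would need to rework the two key steps (``test passes up to the right scale'' and ``failure certifies drift'') for the consecutive, first-failure test--which is precisely the content of Propositions~\ref{prop:iteration-step} and~\ref{prop:lower-bound}.
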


In order to appreciate this theorem, we can observe the following. If we learn using the most recent $r$ samples, similarly to \eqref{eq:error-decomposition} we have the following error decomposition
\begin{align*}
    \lVert P_T - \ep_{T}^{r} \rVert_{\mathcal{F}} \leq \lVert P_{T}^{r} - \ep_{T}^{r}  \rVert_{\mathcal{F}} + \lVert P_T - P_{T}^{r}\rVert_{\mathcal{F}}
\end{align*}
By using Assumption~\ref{assu:sample} and \eqref{eq:drift-chain}, we have that with probability at least $1-\delta$ it holds that
\begin{align}
\label{eq:ub-fixed-r}
    \lVert P_T - \ep_{T}^{r} \rVert_{\mathcal{F}} \leq \frac{C_{\mathcal{F},1}}{\sqrt{r}} + C_{\mathcal{F},2}\sqrt{\frac{\log(1/\delta)}{r}} +\max_{t < r} \lVert P_{T} - P_{T-t} \rVert_{\mathcal{F}} \enspace .
\end{align}
\Cref{thm:main} guarantees a learning error that is essentially within a multiplicative constant factor as good as the upper bound obtained by selecting the optimal choice of $r$ in \eqref{eq:ub-fixed-r}. This result provides an affirmative answer to the open problem posed by \cite{hanneke2019statistical}, that asked if it was possible to adaptively choose the value of $r$ that minimizes \eqref{eq:ub-fixed-r} \footnote{We prove a stronger result. The original formulation of the question uses the looser upper bound $\sum_{t=1}^{r-1} \lVert P_{T-t} - P_{T-t+1}\rVert_{\mathcal{F}}$ to the drift error in \eqref{eq:ub-fixed-r}, which can be significantly  worse as shown in an example of \Cref{sec:binaryclassify}}.  The upper bound of the theorem contains a negligible additional factor $\log(r+1)$ within the logarithm due to the union bound required to consider multiple candidate values of $r$. 
As further evidence of the efficiency of our algorithm, assume that there is no drift, and we are in the usual i.i.d. setting where $P_1 = \ldots = P_T$. In this setting, the following corollary immediately follows from \Cref{thm:main} by setting the drift error equal to $0$.

\begin{corollary}
\label{coro:result}
Consider the setting of \Cref{thm:main}, and also assume that $P_1 = \ldots = P_T$. With probability at least $1-\delta$, the window size $\hat{r}$ computed by Algorithm~\ref{algo} satisfies
\begin{align*}
    \lVert P_T - \ep_{T}^{\hat{r}} \rVert_{\mathcal{F}}  = O\left(  \frac{C_{\mathcal{F},1}}{\sqrt{T}} +  C_{\mathcal{F},2} \sqrt{\frac{
    \log( \log(T+1)/\delta)}{T}} \right)
\end{align*}
\end{corollary}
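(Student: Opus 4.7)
The plan is to obtain the corollary as an immediate consequence of Theorem~\ref{thm:main} by specializing to the i.i.d.\ setting and making a single choice of $r$ in the minimum.

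First, I would observe that when $P_1 = \cdots = P_T$, the drift term in the bound of Theorem~\ref{thm:main} vanishes identically: for every $r \leq T$ and every $t < r$, we have $P_T = P_{T-t}$, hence $\lVert P_T - P_{T-t}\rVert_{\mathcal{F}} = 0$ and therefore $\max_{t<r}\lVert P_T - P_{T-t}\rVert_{\mathcal{F}} = 0$. Plugging this into the conclusion of Theorem~\ref{thm:main} gives, with probability at least $1-\delta$,
\begin{align*}
\lVert P_T - \ep_T^{\hat{r}}\rVert_{\mathcal{F}} = O\!\left(\min_{r \leq T}\left[\frac{C_{\mathcal{F},1}}{\sqrt{r}} + C_{\mathcal{F},2}\sqrt{\frac{\log(\log(r+1)/\delta)}{r}}\right]\right).
\end{align*}

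Second, I would upper-bound the minimum by evaluating the bracketed expression at $r = T$. Both $1/\sqrt{r}$ and $\sqrt{\log(\log(r+1)/\delta)/r}$ are monotonically decreasing in $r$ for $r$ sufficiently large (the $\log\log$ numerator grows much more slowly than the $r$ in the denominator), so $r=T$ is, up to constants, the optimizer of this expression. Substituting yields exactly
\begin{align*}
\lVert P_T - \ep_T^{\hat{r}}\rVert_{\mathcal{F}} = O\!\left(\frac{C_{\mathcal{F},1}}{\sqrt{T}} + C_{\mathcal{F},2}\sqrt{\frac{\log(\log(T+1)/\delta)}{T}}\right),
\end{align*}
which is the statement of the corollary.

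There is no real obstacle here: the entire argument is a direct specialization of Theorem~\ref{thm:main}. The only minor care needed is to verify that taking $r = T$ is a valid (and asymptotically optimal) choice in the minimum, which is immediate from monotonicity of the two summands. Everything nontrivial about the result is carried by Theorem~\ref{thm:main}.
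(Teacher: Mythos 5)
Your proof is correct and matches the paper's own argument: the corollary is obtained from Theorem~\ref{thm:main} by noting the drift term vanishes in the i.i.d.\ case and then bounding the minimum by its value at $r = T$. (The monotonicity remark is unnecessary — the minimum over $r \leq T$ is trivially at most the value at $r = T$ — but this does not affect correctness.)
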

Observe that in the i.i.d. case, Assumption~\ref{assu:sample} implies that with probability at least $1-\delta$, it holds $\lVert P_T - \ep_{T}^{T}  \rVert_{\mathcal{F}} \leq C_{\mathcal{F},1}/\sqrt{T} + C_{\mathcal{F},2} \sqrt{\log(1/\delta)/T}$. \Cref{coro:result} shows that with our algorithm we obtain a result that is competitive except for a negligible extra factor $O(\log T)$ within the logarithm. We want to emphasize that our algorithm does not know in advance whether the data is i.i.d. or drifting, and this factor is the small cost of our algorithm to adapt between those two cases.

\section{Algorithm}
We describe an algorithm that attains the results of \Cref{thm:main}. Throughout the remaining of this section, we let Assumptions~\ref{assu:sample} hold. In particular, the algorithm has access to constants $C_{\mathcal{F},1}$ and $C_{\mathcal{F},2}$ that satisfy this assumption. The main challenge is the fact that the drift error $\lVert P_T - \ep_{T}^{r} \rVert_{\mathcal{F}}$ is unknown for any $r > 1$, and it is challenging to quantify since we have only a
single sample $Z_T \sim P_T$.

We first provide an informal description of the algorithm. Our algorithm revolves around the following strategy. We do not try to estimate the drift. Instead, we try to assess whether increasing the sample size can yield a better upper bound on the error. 
Recall that given the most recent $r$ samples, we have the following upper bound on the error,
\begin{align}
\label{eq:ub}
  \Exp \lVert P_T - \ep_{T}^{r} \rVert_{\mathcal{F}} \leq \frac{C_{\mathcal{F},1}}{\sqrt{r}} +  \lVert P_T - P_{T}^{r} \rVert_{\mathcal{F}}.
\end{align}
The algorithm cannot evaluate (\ref{eq:ub}) since the drift component, $\lVert P_T - P_{T}^{r} \rVert_{\mathcal{F}}$, is unknown. Our key idea is to compare the difference in the upper bound 
on the error when using  the latest $2r$ or $r$ samples: 
\begin{align}
\label{eq:explain-informal}
\left( \frac{C_{\mathcal{F},1}}{\sqrt{2r}} - \frac{C_{\mathcal{F},1}}{\sqrt{r}} \right)+\Big(
\lVert P_T - P_{T}^{2r} \rVert_{\mathcal{F}} - \lVert P_T - P_{T}^{r} \rVert_{\mathcal{F}}\Big)\leq \left( \frac{C_{\mathcal{F},1}}{\sqrt{2r}} - \frac{C_{\mathcal{F},1}}{\sqrt{r}} \right) +  
\lVert  P_{T}^{r} - P_{T}^{2r} \rVert_{\mathcal{F}} 
\end{align}
The last step follows from the triangle inequality: $\lVert P_T - P_T^{2r}\rVert_{\mathcal{F}} \leq \lVert P_T - P_T^{r}\rVert_{\mathcal{F}} + \lVert P_T^{r} - P_T^{2r}\rVert_{\mathcal{F}}$.
By considering the latest $2r$ samples rather than $r$ samples, we can see from \eqref{eq:explain-informal} that the statistical error decreases, and the difference in drift error can be upper bounded by  $\lVert P_T^{2r} - P_T^{r}\rVert_{\mathcal{F}}$.  We can estimate $\lVert P_T^{r} - P_T^{2r}\rVert_{\mathcal{F}}$ using $\lVert  \ep_{T}^{r} - \ep_{T}^{2r} \rVert_{\mathcal{F}}$ within expected error $O(C_{\mathcal{F},1}/\sqrt{r})$ as it depends on $\geq r$ samples.
This suggests the following algorithm. Let $r$ be the current sample size considered by the algorithm. The algorithm starts from $r$ equal to $1$, and doubles the sample size as long as $\lVert  \ep_{T}^{r} - \ep_{T}^{2r} \rVert_{\mathcal{F}}$ is small. If $\lVert  \ep_{T}^{r} - \ep_{T}^{2r} \rVert_{\mathcal{F}}$ is big enough, a substantial drift must have occurred in the distributions $P_{T-2r+1},\ldots, P_{T}$, and we can show that this implies that 
$\max_{t < 2r-1} \lVert  P_{T} - P_{T-t}\rVert_{\mathcal{F}}   $
is also large. 
When this happens, we can stop our algorithm and return the current sample size.

For a formal description of the algorithm, let $\delta \in (0,1)$ denote the probability of failure of our algorithm, and let $r_i = 2^{i}$, for $i \geq 0$, be the size of the training set used by the algorithm at iteration $i+1$.
For ease of notation, we set 
$$C_{\mathcal{F}, \delta} \doteq C_{\mathcal{F},1} + C_{\mathcal{F},2}\sqrt{2\ln(\pi^2/(6\delta))},$$
and
\begin{align}
\label{eq:def-S}
    S(r,\delta) \doteq \frac{C_{\mathcal{F},\delta}}{\sqrt{r}} +  C_{\mathcal{F},2}\sqrt{\frac{2\ln(\log_2(r)+10)}{r}}.
\end{align}

The proofs of the following propositions appear in the Appendix~\ref{appendix:deferred}.
\begin{proposition}
\label{prop:estimation-correct}
With probability at least $1-\delta$, the following event holds:
\begin{align*}
    \lVert P_{T}^{r_i} - \ep_{T}^{r_i} \rVert_{\mathcal{F}} \leq S(r_i,\delta) \hspace{20pt} \forall i \geq 0
\end{align*}
\end{proposition}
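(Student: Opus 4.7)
The plan is to apply the high-probability part of Assumption~\ref{assu:sample} separately for each dyadic sample size $r_i = 2^i$, with a carefully chosen confidence parameter $\delta_i$, and then combine the resulting events via a union bound. Concretely, I would set
$$\delta_i \doteq \frac{6\delta}{\pi^2 (i+1)^2}, \qquad i \geq 0,$$
so that $\sum_{i \geq 0} \delta_i = \delta$ thanks to the Basel identity $\sum_{j \geq 1} 1/j^2 = \pi^2/6$.

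For each fixed $i$, Assumption~\ref{assu:sample} applied with $r = r_i$ and confidence $\delta_i$ yields, with probability at least $1-\delta_i$,
$$\lVert P_T^{r_i} - \ep_T^{r_i}\rVert_{\mathcal{F}} \leq \frac{C_{\mathcal{F},1}}{\sqrt{r_i}} + C_{\mathcal{F},2}\sqrt{\frac{\ln(1/\delta_i)}{r_i}}.$$
A union bound over all $i \geq 0$ then gives that these inequalities hold simultaneously for every $i$ with probability at least $1-\delta$.

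It remains to verify that the right-hand side is at most $S(r_i,\delta)$. Expanding $\ln(1/\delta_i) = \ln(\pi^2/(6\delta)) + 2\ln(i+1)$ and applying $\sqrt{a+b} \leq \sqrt{a}+\sqrt{b}$, I would split
$$C_{\mathcal{F},2}\sqrt{\frac{\ln(1/\delta_i)}{r_i}} \leq C_{\mathcal{F},2}\sqrt{\frac{\ln(\pi^2/(6\delta))}{r_i}} + C_{\mathcal{F},2}\sqrt{\frac{2\ln(i+1)}{r_i}}.$$
Since $\log_2(r_i) = i$ and $i+1 \leq i+10$, the second summand is dominated by $C_{\mathcal{F},2}\sqrt{2\ln(\log_2(r_i)+10)/r_i}$, matching the second term in the definition \eqref{eq:def-S} of $S(r_i,\delta)$. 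For the first summand, the trivial bound $\sqrt{\ln(\pi^2/(6\delta))} \leq \sqrt{2\ln(\pi^2/(6\delta))}$ lets it combine with $C_{\mathcal{F},1}/\sqrt{r_i}$ to form $C_{\mathcal{F},\delta}/\sqrt{r_i}$.

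There is no conceptual obstacle here; the argument is a textbook "peeling" over a geometric grid of window sizes, and the only care required is bookkeeping the constants. The apparent slack built into the definition of $S$ (the factors of $2$ inside the logarithms and the $+10$ shift) is exactly what is needed to absorb the additive decomposition of $\ln(1/\delta_i)$ under the $\sqrt{a+b} \leq \sqrt{a}+\sqrt{b}$ estimate and to keep the bound meaningful at the small indices $i=0,1$.
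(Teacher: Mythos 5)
Your proof is correct and follows essentially the same route as the paper: apply the high-probability bound of Assumption~\ref{assu:sample} at each dyadic size $r_i=2^i$ with a polynomially decaying confidence $\delta_i$, union bound, and absorb $\ln(1/\delta_i)$ into $S(r_i,\delta)$ via $\sqrt{a+b}\leq\sqrt{a}+\sqrt{b}$. The only (immaterial) difference is your choice $\delta_i\propto 1/(i+1)^2$ in place of the paper's $1/(i+10)^2$, which still satisfies $\sum_i\delta_i\leq\delta$ and is dominated by the $+10$ shift in the definition of $S$.
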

We assume that the event of Proposition~\ref{prop:estimation-correct} holds, otherwise our algorithm fails (with probability $\leq \delta$). Our algorithm considers the following \emph{inflated} upper bound $U(r,\delta)$  to $\lVert \ep_{T}^{r} - P_T  \rVert_{\mathcal{F}}$ defined as follows
\begin{align}
\label{eq:def-U}
    U(r,\delta) \doteq 21\cdot  S(r,\delta) + \lVert P_T - P_{T}^{r} \rVert_{\mathcal{F}}.
\end{align}
Proposition~\ref{prop:estimation-correct} implies that with probability at least $1-\delta$ for any $i\geq 0$, we have
\begin{align*}
 U(r_i,\delta)=21\cdot  S(r_i,\delta) + \lVert P_T - P_{T}^{r_i} \rVert_{\mathcal{F}} &\geq  \lVert  \ep_{T}^{r_i} - P_{T}^{r_i} \rVert_{\mathcal{F}} + \lVert P_T - P_{T}^{r_i}  \rVert_{\mathcal{F}}  
 \geq \lVert P_T - \ep_{T}^{r_i}  \rVert_{\mathcal{F}}
\end{align*}
We use this value as an
 upper bound that our algorithm guarantees if we choose a sample size $r_i$. We observe that with respect to \eqref{eq:ub}, the upper bound of the algorithm also contains an  additional term that is proportional to $\sqrt{\log(1/\delta)}$ for the high-probability guarantee, and a term proportional to $\sqrt{\log \log r_i}$ that is necessary for the union bound across all possible window sizes $r_i$ for $i \geq 0$. This union bound is required as we need to have a correct estimation for all possible sample sizes $r_i$ in order to assure that the algorithm takes a correct decision at each step. 
The constant factor in front of the upper bound on the statistical error $ S(r_i,\delta)$ is a technical detail that allows taking into account the error of the estimation of the difference in drift error.

We want to compare the upper bound $U(r_i,\delta)$ with the upper bound $U(r_{i+1},\delta)$ obtained by doubling the current sample size $r_i$. As we previously discussed, it is possible to show that if $\lVert \ep_{T}^{r_i} - \ep_{T}^{r_{i+1}}\rVert_{\mathcal{F}}$ is sufficiently small, then $U(r_{i+1},\delta) \leq U(r_i,\delta)$, and this intuition is formalized in the following proposition.
\begin{proposition}
\label{prop:iteration-step} Assume that the event of Proposition~\ref{prop:estimation-correct} holds and let $i \geq 0$. 
\begin{align*}
      \mbox{If}~~\lVert \ep_{T}^{r_i} - \ep_{T}^{r_{i+1}}  \rVert_{\mathcal{F}} \leq 4S(r_i,\delta)
      ~~\mbox{then}~~ U(r_{i+1},\delta) \leq U(r_i,\delta).
\end{align*}
\end{proposition}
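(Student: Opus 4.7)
The plan is to unpack the definition of $U$ and reduce the desired inequality $U(r_{i+1},\delta)\le U(r_i,\delta)$ to a statement about how the drift-averaged distributions differ, which we can then control via the empirical gap hypothesis. Concretely, since
\[
U(r,\delta) = 21\cdot S(r,\delta) + \lVert P_T - P_T^r\rVert_{\mathcal{F}},
\]
what must be shown is
\[
\lVert P_T - P_T^{r_{i+1}}\rVert_{\mathcal{F}} - \lVert P_T - P_T^{r_i}\rVert_{\mathcal{F}} \le 21\bigl(S(r_i,\delta) - S(r_{i+1},\delta)\bigr).
\]
I would start by applying the triangle inequality to bound the left-hand side by $\lVert P_T^{r_i} - P_T^{r_{i+1}}\rVert_{\mathcal{F}}$, which is the genuinely data-accessible quantity that motivates the whole algorithm.

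The next step is to move from the population gap $\lVert P_T^{r_i} - P_T^{r_{i+1}}\rVert_{\mathcal{F}}$ to the empirical gap $\lVert \ep_T^{r_i} - \ep_T^{r_{i+1}}\rVert_{\mathcal{F}}$. Inserting $\ep_T^{r_i}$ and $\ep_T^{r_{i+1}}$ with the triangle inequality and invoking the good event of Proposition~\ref{prop:estimation-correct} for both $i$ and $i+1$ gives
\[
\lVert P_T^{r_i} - P_T^{r_{i+1}}\rVert_{\mathcal{F}} \le S(r_i,\delta) + \lVert \ep_T^{r_i} - \ep_T^{r_{i+1}}\rVert_{\mathcal{F}} + S(r_{i+1},\delta).
\]
Using the hypothesis $\lVert \ep_T^{r_i} - \ep_T^{r_{i+1}}\rVert_{\mathcal{F}} \le 4S(r_i,\delta)$, this collapses to $5S(r_i,\delta) + S(r_{i+1},\delta)$.

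At that point the whole claim reduces to the purely deterministic inequality
\[
5S(r_i,\delta) + S(r_{i+1},\delta) \le 21\bigl(S(r_i,\delta)-S(r_{i+1},\delta)\bigr),
\]
i.e.\ $22\,S(r_{i+1},\delta) \le 16\,S(r_i,\delta)$, or equivalently $S(r_{i+1},\delta)/S(r_i,\delta) \le 8/11$. This is the main technical check and the one place where the specific form of $S$ and the factor $21$ matter. The $C_{\mathcal{F},\delta}/\sqrt{r}$ piece gains exactly a factor $1/\sqrt{2}\approx 0.707<8/11\approx 0.727$ when $r$ doubles, so it is handled trivially. The delicate term is $C_{\mathcal{F},2}\sqrt{2\ln(\log_2(r)+10)/r}$, whose ratio when $r\mapsto 2r$ is $\sqrt{\ln(\log_2(r)+11)/(2\ln(\log_2(r)+10))}$; this is increasing in the $\log\log$ factor and worst at $r=r_0=1$, where it evaluates to $\sqrt{\ln 11/(2\ln 10)}\approx 0.722 \le 8/11$. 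I would verify this worst case directly and then note monotonicity in $r$ to cover all $i\ge 0$.

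The main obstacle is precisely this last numerical verification: the constants $21$ in the definition of $U$ and $+10$ inside the $\log\log$ of $S$ are tuned exactly to make the ratio bound $S(r_{i+1},\delta)/S(r_i,\delta) \le 8/11$ hold uniformly starting from $r_0=1$, so a careful (but elementary) comparison at the base case is required. Everything else is a routine triangle-inequality chain that threads the good event of Proposition~\ref{prop:estimation-correct} and the algorithm's empirical test together.
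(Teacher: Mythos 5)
Your proposal is correct and follows essentially the same route as the paper's proof: the same decomposition of $U(r_{i+1},\delta)-U(r_i,\delta)$, the same two triangle-inequality steps through $\lVert P_T^{r_i}-P_T^{r_{i+1}}\rVert_{\mathcal{F}}$ and the empirical distributions under the event of Proposition~\ref{prop:estimation-correct}, and the same final reduction to $22\,S(r_{i+1},\delta)\le 16\,S(r_i,\delta)$, verified term by term with the worst case at $i=0$ (where $\sqrt{\ln 11/(2\ln 10)}\le 8/11$ and $1/\sqrt{2}\le 8/11$). The only cosmetic difference is that you phrase the final check as a ratio bound $S(r_{i+1},\delta)/S(r_i,\delta)\le 8/11$, while the paper verifies the equivalent inequality $22\,S(r_{i+1},\delta)-16\,S(r_i,\delta)\le 0$ directly.
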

The algorithm works as follows. Starting from $i=0$, we iteratively increase $i$ by one while $ \lVert \ep_{T}^{r_i} - \ep_{T}^{r_{i+1}}  \rVert_{\mathcal{F}} \leq 4S(r_i,\delta)$. There are two cases. In the first case, we reach $r_i = \lfloor \log_2 T \rfloor$, and this implies that $r_{i+1} > T$. In this case, the algorithm returns $r_{i}$, and Proposition~\ref{prop:iteration-step} guarantees that the sample size returned by the algorithm is as good as any previously considered sample size. In the second case, we reach a value of $i$ such that $\lVert \ep_{T}^{r_i} - \ep_{T}^{r_{i+1}}  \rVert_{\mathcal{F}} > 4S(r_i,\delta)$. In this case, we return $r_i$, and we can still prove that this is a good choice. In fact, as shown in the next proposition, this terminating condition implies a lower bound $\max_{t < r_{i+1}} \lVert  P_{T} - P_{T-i}\rVert_{\mathcal{F}}$, thus any estimation with a number of recent samples greater or equal to $r_{i+1}$ could have a non-negligible drift error.
\begin{proposition}
\label{prop:lower-bound}
Assume that the event of Proposition~\ref{prop:estimation-correct} holds, and assume that there exists $i \geq 0$ such that
    $\lVert \ep_{T}^{r_i} - \ep_{T}^{r_{i+1}} \rVert_{\mathcal{F}} > 4S(r_i,\delta)$, then
    $\max_{t < r_{i+1}} \lVert  P_{T} - P_{T-t}\rVert_{\mathcal{F}}   > S(r_i,\delta)$.
\end{proposition}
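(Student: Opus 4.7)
The plan is to chain together the triangle inequality with the good event of Proposition~\ref{prop:estimation-correct}, a simple algebraic decomposition of $P_T^{r_{i+1}}$, and the first bound in \eqref{eq:drift-chain}. The strategy is to convert the assumed lower bound on the \emph{empirical} quantity $\lVert \ep_T^{r_i} - \ep_T^{r_{i+1}}\rVert_{\mathcal{F}}$ into a lower bound on the \emph{population} quantity $\lVert P_T^{r_i} - P_T^{r_{i+1}}\rVert_{\mathcal{F}}$, and then into a lower bound on the maximum drift over the window of size $r_{i+1}$.

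First, I would write
\begin{align*}
\lVert \ep_T^{r_i} - \ep_T^{r_{i+1}}\rVert_{\mathcal{F}}
\le \lVert \ep_T^{r_i} - P_T^{r_i}\rVert_{\mathcal{F}} + \lVert P_T^{r_i} - P_T^{r_{i+1}}\rVert_{\mathcal{F}} + \lVert P_T^{r_{i+1}} - \ep_T^{r_{i+1}}\rVert_{\mathcal{F}}.
\end{align*}
By the event of Proposition~\ref{prop:estimation-correct}, the two empirical-to-population terms are bounded by $S(r_i,\delta)$ and $S(r_{i+1},\delta)$ respectively. Since both $1/\sqrt{r}$ and $\sqrt{\ln(\log_2 r+10)/r}$ are non-increasing in $r$, the function $S(\cdot,\delta)$ defined in \eqref{eq:def-S} is non-increasing, so $S(r_{i+1},\delta)\le S(r_i,\delta)$. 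Combining with the hypothesis $\lVert \ep_T^{r_i}-\ep_T^{r_{i+1}}\rVert_{\mathcal{F}}>4S(r_i,\delta)$ yields
\begin{align*}
\lVert P_T^{r_i} - P_T^{r_{i+1}}\rVert_{\mathcal{F}} > 2 S(r_i,\delta).
\end{align*}

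Next I would exploit the fact that $r_{i+1}=2r_i$, so that
\begin{align*}
P_T^{r_{i+1}} = \tfrac{1}{2}P_T^{r_i} + \tfrac{1}{2}\widetilde P, \qquad \widetilde P \doteq \tfrac{1}{r_i}\sum_{t=T-2r_i+1}^{T-r_i} P_t,
\end{align*}
which gives $P_T^{r_i} - P_T^{r_{i+1}} = \tfrac{1}{2}(P_T^{r_i} - \widetilde P)$ and therefore $\lVert P_T^{r_i}-\widetilde P\rVert_{\mathcal{F}} > 4 S(r_i,\delta)$. Now I apply triangle inequality around $P_T$ and use the first inequality in \eqref{eq:drift-chain} on each of the two averages:
\begin{align*}
\lVert P_T^{r_i} - \widetilde P\rVert_{\mathcal{F}}
\le \lVert P_T^{r_i} - P_T\rVert_{\mathcal{F}} + \lVert \widetilde P - P_T\rVert_{\mathcal{F}}
\le 2\max_{t<r_{i+1}} \lVert P_T - P_{T-t}\rVert_{\mathcal{F}},
\end{align*}
where the last step uses that both $P_T^{r_i}$ and $\widetilde P$ are averages of distributions $P_{T-t}$ with indices $t<r_{i+1}=2r_i$. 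Dividing gives $\max_{t<r_{i+1}}\lVert P_T - P_{T-t}\rVert_{\mathcal{F}} > 2S(r_i,\delta) > S(r_i,\delta)$, which is the desired bound (in fact, with a factor-of-2 slack).

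There is no real obstacle here; the only subtle point is noticing that the algebraic split of $P_T^{r_{i+1}}$ into $P_T^{r_i}$ and the "older half" $\widetilde P$ lets one avoid naively bounding the difference of averages and instead invoke \eqref{eq:drift-chain} cleanly on each half. The monotonicity of $S(\cdot,\delta)$ is used only to absorb $S(r_{i+1},\delta)$ into a single $S(r_i,\delta)$ term, which is what keeps the constant $4$ appearing in the hypothesis of the proposition consistent with the final $S(r_i,\delta)$ conclusion.
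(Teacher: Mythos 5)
Your proof is correct and follows essentially the same route as the paper: triangle inequality plus the event of Proposition~\ref{prop:estimation-correct} (and monotonicity of $S(\cdot,\delta)$) to convert the empirical gap into $\lVert P_T^{r_i}-P_T^{r_{i+1}}\rVert_{\mathcal{F}} > 2S(r_i,\delta)$, then bounding this population gap by $2\max_{t<r_{i+1}}\lVert P_T - P_{T-t}\rVert_{\mathcal{F}}$. Your half-decomposition $P_T^{r_{i+1}} = \tfrac12 P_T^{r_i} + \tfrac12\widetilde P$ is only a minor algebraic refinement of the paper's direct triangle inequality through $P_T$, and it buys a factor-of-two sharper conclusion ($>2S(r_i,\delta)$ rather than $>S(r_i,\delta)$), which is not needed for the stated bound.
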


The pseudo-code of the algorithm is reported in Algorithm~\ref{algo}.
The algorithm receives in input $\delta$, the samples $Z_1,\ldots,Z_T$, and returns an integer $\hat{r} \in \{1,\ldots,T\}$ that satisfies \Cref{thm:main}.

\begin{algorithm2e}
\label{algo}
\caption{Adaptive Learning Algorithm under Drift}
$i \gets 0$ \;
\While{$r_i \leq T/2$}{
    \If{
    $\lVert \ep_{T}^{r_i} - \ep_{T}^{r_{i+1}}\rVert_{\mathcal{F}} \leq 4S(r_{i},\delta)$}{
        $i \gets i+1$ \;
    }\Else{
        \Return $r_i$ \;
    }
}
\Return $r_i$
\end{algorithm2e}

\begin{proof}[Proof of \Cref{thm:main}]
We assume that the event of Proposition~\ref{prop:estimation-correct} holds. If it doesn't, we say that our algorithm fails, and this happens with probability $\leq \delta$. Let $\hat{r} = r_j = 2^j$ for $j\geq 0$ be the value returned by the algorithm when it terminates. We remind that our algorithm guarantees an upper bound $U(r_j,\delta) \geq \lVert P_T - \ep_{T}^{r_j}\rVert_{\mathcal{F}}$ to the learning error by using $r_j$ samples.
Let $r^*$ be the value of $r$ that minimizes this expression
\begin{align*}
    r^* = \argmin_{r \leq T}\left( 21\cdot  S(r,\delta) + \max_{t < r} \lVert  P_{T} - P_{T-t}\rVert_{\mathcal{F}} \right) \enspace, 
\end{align*}
and let $B^*$ be the minimum value of this expression, i.e.
\begin{align*}
    B^* =  21\cdot  S(r^*,\delta) + \max_{t < r^*} \lVert  P_{T} - P_{T-t}\rVert_{\mathcal{F}}
\end{align*}
be any fixed optimal sample size, where we remind the definition of $U$ from \eqref{eq:def-U}. The first observation is that the right-hand side of the inequality in \Cref{thm:main} is $O(B^*)$. Therefore, in order to prove the theorem, it is sufficient to show that $U(r_j,\delta)/B^* = O(1)$.

We distinguish two cases: $(a)$ $r^* < 2r_j$, and $(b)$ $r^* \geq 2r_j$. We first consider case $(a)$. We let $k \geq 0$ be the largest integer such that $r_k \leq r^*$. Since $r^* < 2r_j$, it holds that $k \leq j$. Since our algorithm returned $r_j$, Proposition~\ref{prop:iteration-step} applies for $i=0,\ldots,j-1$, thus $U(r_j,\delta) \leq U(r_k,\delta)$. We have that:
\begin{align}
\label{eq:tmptmp0}
\frac{U(r_j,\delta)}{B^*} \leq \frac{U(r_k,\delta)}{B^*}
\end{align}
We observe that
\begin{align*}
    \frac{U(r_k,\delta)}{B^*} &= \frac{21S(r_k,\delta) + \lVert P_T - P_T^{r_k} \rVert_{\mathcal{F}}}{21S(r^*,\delta)+\max_{t < r^*} \lVert  P_{T} - P_{T-t}\rVert_{\mathcal{F}}} \leq \frac{21S(r_k,\delta) + \max_{t < r_k} \lVert  P_{T} - P_{T-t}\rVert_{\mathcal{F}}}{21S(r^*,\delta)+\max_{t < r^*} \lVert  P_{T} - P_{T-t}\rVert_{\mathcal{F}}} \\
    &\leq \frac{S(r_k,\delta)}{S(r^*,\delta)} + \frac{\max_{t < r_k} \lVert  P_{T} - P_{T-t}\rVert_{\mathcal{F}}}{\max_{t < r^*} \lVert  P_{T} - P_{T-t}\rVert_{\mathcal{F}}} \leq \sqrt{\frac{r^*}{r_k}} +1 \leq 3 \enspace, 
\end{align*}
where the first inequality is due  to \eqref{eq:drift-chain}, and the last inequality is due to the fact that $r_k \leq r^* < 2 r_k$ by definition of $r_k$. By using the above inequality in \eqref{eq:tmptmp0}, we obtain that $U(r_j,\delta)/B^* \leq  3$.

We consider case $(b)$. Since $T \geq r^* \geq 2r_j = r_{j+1}$, the algorithm returned $r_j$ because $\lVert \ep_{T}^{r_j} - \ep_{T}^{r_{j+1}} \rVert_{\mathcal{F}} > 4S(r_j,\delta)$. Using Proposition~\ref{prop:lower-bound}, we have
\begin{align*}
    \max_{t < r^*} \lVert  P_{T} - P_{T-t}\rVert_{\mathcal{F}} \geq  \max_{t < r_{j+1}} \lVert  P_{T} - P_{T-t}\rVert_{\mathcal{F}} > S(r_{j},\delta) \enspace .
\end{align*}
Therefore, we have that
\begin{align*}
    \frac{U(r_j,\delta)}{B^*} &= \frac{21S(r_{j},\delta) +\lVert P_T - P_{T}^{r_j} \rVert_{\mathcal{F}}}{21S(r^*,\delta)+ \max_{t < r^*} \lVert  P_{T} - P_{T-t}\rVert_{\mathcal{F}}} \\& \leq \frac{21S(r_j,\delta)}{\max_{t < r^*} \lVert  P_{T} - P_{T-t}\rVert_{\mathcal{F}}} + \frac{\max_{t < r_j} \lVert  P_{T} - P_{T-t}\rVert_{\mathcal{F}}}{\max_{t < r^*} \lVert  P_{T} - P_{T-t}\rVert_{\mathcal{F}}} 
    \leq \frac{21S(r_j,\delta)}{S(r_j,\delta)} + 1 = 21
\end{align*}
This concludes the proof.
\end{proof}

\section{Binary Classification with Distribution Drift}
\label{sec:binaryclassify}
In this section, we show an application of \Cref{thm:main} for the fundamental statistical learning problem of agnostic learning a family of binary classifiers. Let $\mathcal{Z} = \mathcal{X} \times \mathcal{Y}$, where $\mathcal{X}$ is the feature space, and $\mathcal{Y} = \{0,1\}$ is the label space, i.e. $Z_t = (X_t,Y_t)$. A hypothesis class $\mathcal{H}$ is a class of functions $h: \mathcal{X} \mapsto \mathcal{Y}$ that classify the feature space $\mathcal{X}$. Given a point $(x,y) \in \mathcal{X} \times \mathcal{Y}$ and a function $h \in \mathcal{H}$, the risk of $h$ on $(x,y)$ is defined through the following function $L_h(x,y) = \mathbf{1}_{\{y \neq h(x)\}}$. We work with the class of functions $\mathcal{F} = \{ L_h : h \in \mathcal{H}\}$.

Let $h^* = \argmin_{h \in \mathcal{H}} P_T(L_h)$ be a function with minimum expected risk with respect to the current distribution $P_T$. We want to use \Cref{thm:main} to find a function $h \in \mathcal{H}$ such that $P_T(L_h)$ is close to $P_T(L_{h^*})$. Let $\nu$ be the VC dimension of $\mathcal{H}$. The VC dimension describes the complexity of the family $\mathcal{H}$, and it is used to quantify the statistical error. In particular, using standard learning tools, it is possible to show that
the family $\mathcal{F}$ satisfies Assumption~\ref{assu:sample} on the sample complexity with constants $C_{\mathcal{F},1} = O(\sqrt{\nu})$ and $C_{\mathcal{F},2} = O(1)$ (e.g., \citep{mohri2018foundations}).

Finally, to satisfy Assumption~\ref{assu:computability}, we need to exhibit a procedure that for $r \leq T/2$, outputs the quantity $\lVert \ep_{T}^{r}- \ep_{T}^{2r} \rVert_{\mathcal{F}}$. This quantity is also referred to as $\mathcal{Y}$-discrepancy between the empirical distributions $\ep_{T}^{r}$ and $\ep_{T}^{2r}$  in previous work on transfer learning \citep{mohri2012new}. We can  adapt a strategy from \cite{ben2010theory} to our setting and show that it is possible to compute it by solving an empirical risk minimization problem. 
We say that a hypothesis class $\mathcal{H}$ is \emph{computationally tractable} if given a finite set of points from $\mathcal{X} \times \mathcal{Y}$, there exists an algorithm that returns a hypothesis $h$ that achieves the minimum risk over this set of points.

\begin{lemma}
\label{lemma:computation-binary}
Assume that $\mathcal{H}$ is symmetric, i.e. $h \in \mathcal{H} \iff 1-h \in \mathcal{H}$. For $r \leq T/2$, it holds
\begin{align*}
\lVert \ep_{T}^{r} - \ep_{T}^{2r} \rVert_{\mathcal{F}} = \frac{1}{2} - \frac{1}{2}\inf_{h \in \mathcal{H}}\left[\frac{1}{r} \sum_{t=T-r+1}^{T} L_{h}(X_t,1-Y_t)  +\frac{1}{r}  \sum_{t=T-2r+1}^{T-r} L_{h}(X_t,Y_t) \right]
\end{align*}
\end{lemma}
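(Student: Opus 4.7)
The plan is to unwind the norm on the left-hand side by an elementary computation, then apply the symmetry assumption to remove the absolute value, and finally rewrite the resulting supremum as an empirical risk minimization via the identity $L_h(x,y) = 1 - L_h(x, 1-y)$.

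First I would write $\ep_T^{r}(L_h) - \ep_T^{2r}(L_h)$ in a convenient form. Splitting the sum defining $\ep_T^{2r}$ into its first $r$ and last $r$ terms gives
\begin{align*}
\ep_T^{r}(L_h) - \ep_T^{2r}(L_h) = \frac{1}{2r}\sum_{t=T-r+1}^{T} L_h(X_t,Y_t) - \frac{1}{2r}\sum_{t=T-2r+1}^{T-r} L_h(X_t,Y_t),
\end{align*}
so that
\begin{align*}
\lVert \ep_T^{r} - \ep_T^{2r} \rVert_{\mathcal{F}} = \sup_{h \in \mathcal{H}} \left| \frac{1}{2r}\sum_{t=T-r+1}^{T} L_h(X_t,Y_t) - \frac{1}{2r}\sum_{t=T-2r+1}^{T-r} L_h(X_t,Y_t) \right|.
\end{align*}

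Next I would use the symmetry of $\mathcal{H}$ to drop the absolute value. Since $L_{1-h}(x,y) = \mathbf{1}\{y \neq 1-h(x)\} = 1 - L_h(x,y)$, replacing $h$ by $1-h$ flips the sign of $\ep_T^{r}(L_h) - \ep_T^{2r}(L_h)$ (the constants cancel). Because $\mathcal{H}$ is closed under the map $h \mapsto 1-h$, the supremum over $h$ of the signed expression equals the supremum of its absolute value, so
\begin{align*}
\lVert \ep_T^{r} - \ep_T^{2r} \rVert_{\mathcal{F}} = \frac{1}{2}\sup_{h \in \mathcal{H}} \left[ \frac{1}{r}\sum_{t=T-r+1}^{T} L_h(X_t,Y_t) - \frac{1}{r}\sum_{t=T-2r+1}^{T-r} L_h(X_t,Y_t) \right].
\end{align*}

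Finally I would convert the supremum into an infimum that can be realized by the computational tractability oracle. Using $L_h(X_t,Y_t) = 1 - L_h(X_t, 1-Y_t)$ on the first sum, the bracketed expression becomes
\begin{align*}
1 - \frac{1}{r}\sum_{t=T-r+1}^{T} L_h(X_t, 1-Y_t) - \frac{1}{r}\sum_{t=T-2r+1}^{T-r} L_h(X_t,Y_t).
\end{align*}
Taking supremum is the same as taking $1$ minus the infimum of the remaining two empirical risk terms, so substituting and multiplying by $\tfrac{1}{2}$ gives exactly the claimed identity.

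This is essentially a bookkeeping lemma and I do not expect a serious obstacle; the only subtlety is verifying that the symmetry step removes the absolute value correctly, i.e. that both the positive and negative extremes are attained within $\mathcal{H}$ thanks to the hypothesis $h \in \mathcal{H} \iff 1-h \in \mathcal{H}$. Once this is checked, the rewriting via $L_h(\cdot,1-Y) = 1 - L_h(\cdot,Y)$ is mechanical and yields the ERM formulation.
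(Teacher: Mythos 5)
Your proof is correct and follows essentially the same route as the paper's: both reduce the discrepancy to an empirical risk minimization via the identities $L_h(x,1-y) = 1 - L_h(x,y)$ and $L_{1-h}(x,y) = 1 - L_h(x,y)$ together with the symmetry of $\mathcal{H}$. The only (cosmetic) difference is the ordering: you invoke symmetry up front to drop the absolute value directly, whereas the paper first writes the norm as a maximum of two signed suprema, converts each to an infimum, and only then uses symmetry to show the two infima coincide.
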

Given $r$, the minimum in Lemma~\ref{lemma:computation-binary} can be computed by solving an empirical risk minimization over the most recent $2r$ points, where we flip the label of half of those points, i.e. we use the points  $(X_{T-2r+1},Y_{T-2r+1}),\ldots,(X_{T-r},Y_{T-r}), (X_{T-r+1}, 1-Y_{T-r+1}), \ldots, (X_{T},1- Y_{T})$.  Thus, if $\mathcal{H}$ is computationally tractable and symmetric, Assumption~\ref{assu:computability} holds. This is indeed true for many hypothesis class, e.g., hyperplanes, axis-aligned rectangles, or threshold functions. However, the empirical risk minimization problem could be expensive to solve exactly, but as we discuss in Section~\ref{sec:conclusion}, it is possible to modify the algorithm to allow for an approximation of $\lVert \ep_{T}^{r} - \ep_{T}^{2r} \rVert_{\mathcal{F}}$.

Our main result for binary classification is given in the following theorem:
\begin{theorem}
\label{thm:binary-classification}
Let $\mathcal{H}$ be a computationally tractable and symmetric binary class with VC dimension $\nu$. Let $\mathcal{F} = \{ L_h : h \in \mathcal{H} \}$. Let $\hat{r} \leq T$ be output of Algorithm~1 with input $Z_1,\ldots,Z_T$ using the family $\mathcal{F}$. Let
    $\hat{h} =  \argmin_{h \in \mathcal{H}} \ep_{T}^{\hat{r}}(L_h)$ be an empirical risk minimizer over the most recent $\hat{r}$ samples. With probability at least $1-\delta$, the following holds:
\begin{align*}
    P_T(L_{\hat{h}}) - P_T(L_{{h^*}}) = O\left( \min_{r\leq T} \left[ \sqrt{\frac{\nu}{r}} + \max_{t <r }\lVert P_{T} - P_{T-t} \rVert_{\mathcal{F}}  + \sqrt{\frac{\log(\log(r+1)/\delta)}{r}}  \right] \right)
\end{align*}
\end{theorem}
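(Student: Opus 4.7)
The plan is to derive \Cref{thm:binary-classification} as a direct consequence of \Cref{thm:main} applied to the loss class $\mathcal{F} = \{L_h : h \in \mathcal{H}\}$, together with the standard agnostic ERM decomposition. First I would verify the two hypotheses of \Cref{thm:main}. Assumption~\ref{assu:sample} follows from the classical uniform convergence bound for VC classes: since $L_h$ takes values in $\{0,1\}$ and $\mathcal{F}$ inherits the VC dimension $\nu$ of $\mathcal{H}$, symmetrization together with a Dudley/Haussler chaining bound (or equivalently a Massart-type bound on the Rademacher complexity) gives the required rates with $C_{\mathcal{F},1} = O(\sqrt{\nu})$ and $C_{\mathcal{F},2} = O(1)$, uniformly over any product distribution $P_T^r$ (the samples remain independent under drift, so standard VC uniform convergence still applies). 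Assumption~\ref{assu:computability} is exactly the content of \Cref{lemma:computation-binary}: the symmetry of $\mathcal{H}$ reduces the computation of $\lVert\ep_T^r - \ep_T^{2r}\rVert_{\mathcal{F}}$ to an empirical risk minimization over $2r$ points with the labels of the most recent $r$ flipped, which is solvable by assumption that $\mathcal{H}$ is computationally tractable.

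Next I would invoke \Cref{thm:main} with the above constants. This yields, with probability at least $1-\delta$, the bound
\begin{align*}
\lVert P_T - \ep_T^{\hat{r}}\rVert_{\mathcal{F}} = O\!\left( \min_{r\le T}\Big[\sqrt{\nu/r} + \max_{t<r}\lVert P_T - P_{T-t}\rVert_{\mathcal{F}} + \sqrt{\log(\log(r+1)/\delta)/r}\Big]\right).
\end{align*}
To move from this uniform approximation guarantee to the excess risk guarantee stated in \Cref{thm:binary-classification}, I apply the standard agnostic ERM argument: since $\hat{h}$ minimizes $\ep_T^{\hat{r}}(L_h)$ over $\mathcal{H}$, we have $\ep_T^{\hat{r}}(L_{\hat{h}}) \le \ep_T^{\hat{r}}(L_{h^*})$. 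Adding and subtracting $\ep_T^{\hat{r}}(L_{\hat{h}})$ and $\ep_T^{\hat{r}}(L_{h^*})$ gives
\begin{align*}
P_T(L_{\hat h}) - P_T(L_{h^*}) &\le \bigl(P_T(L_{\hat h}) - \ep_T^{\hat{r}}(L_{\hat h})\bigr) + \bigl(\ep_T^{\hat{r}}(L_{h^*}) - P_T(L_{h^*})\bigr) \\
&\le 2\,\lVert P_T - \ep_T^{\hat{r}}\rVert_{\mathcal{F}}.
\end{align*}
Combining this with the bound from \Cref{thm:main} yields the claimed rate, since the factor of $2$ is absorbed into the $O(\cdot)$.

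There is no real obstacle here beyond bookkeeping: the proof is essentially a pipeline that composes (i) the VC uniform convergence verifying \Cref{assu:sample}, (ii) \Cref{lemma:computation-binary} verifying \Cref{assu:computability}, (iii) \Cref{thm:main} run on $\mathcal{F}$, and (iv) the two-sided ERM bound above. The only mild subtlety worth a sentence in the writeup is to note that the uniform convergence assumption is required to hold for each average distribution $P_T^r$, not just an i.i.d.\ distribution; this is fine because the samples $Z_{T-r+1},\ldots,Z_T$ remain mutually independent and the standard VC-based symmetrization argument requires only independence, not identical distribution, so it yields the rate for the mixture $P_T^r$. With this observation, the proof of \Cref{thm:binary-classification} is complete.
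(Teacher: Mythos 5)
Your proposal is correct and follows essentially the same route as the paper's proof: verify Assumption~\ref{assu:sample} via VC uniform convergence with $C_{\mathcal{F},1}=O(\sqrt{\nu})$, $C_{\mathcal{F},2}=O(1)$, verify Assumption~\ref{assu:computability} via Lemma~\ref{lemma:computation-binary} and computational tractability, invoke \Cref{thm:main}, and finish with the standard ERM decomposition giving $P_T(L_{\hat h}) - P_T(L_{h^*}) \le 2\lVert P_T - \ep_T^{\hat r}\rVert_{\mathcal{F}}$. Your added remark that the uniform convergence argument needs only independence (not identical distribution) of the samples is a correct and worthwhile clarification, consistent with how the paper uses the assumption.
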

The symmetry assumption is not necessary, and in the appendix we show how to remove it at the cost of a more expensive computation of the discrepancy. 
It is instructive to compare this upper bound with the results of previous work.
An often used assumption in the literature,  originally introduced in \cite{bartlett1992learning}, is that there is a known value $\Delta>0$, such that drift in each step is bounded by $\Delta$,
i.e. for all $t<T$, $\lVert P_{t+1} - P_{t} \rVert_{\mathcal{F}} \leq  \Delta$.
Assume that $T$ is sufficiently large, i.e. $T = \Omega( (\nu/\Delta^2)^{1/3})$. By using this assumption on the drift, previous work showed that with high-probability,  they can find a classifier $\hat{h}$ such that $
    P_T(L_{\hat{h}}) - P_T(L_{h^*}) = O\left( \sqrt[3]{\Delta \nu} \right)$,
and it can be shown that this upper bound is tight up to constants within those assumptions \citep{barve1996complexity}. These previous works assumed they had access a priori to the value of $\Delta$, since those algorithms compute $\hat{h}$ by solving a empirical risk minimization over a number of previous samples $\Theta( (\nu/\Delta^2)^{1/3})$ that is decided before  observing the data. On the other hand, this assumption on the drift together with \eqref{eq:drift-chain} implies that $\max_{t < r} \lVert  P_{T} - P_{T-t}\rVert_{\mathcal{F}}\leq (r-1)\Delta$.
Our algorithm (Theorem~{\ref{thm:binary-classification}}) guarantees an error that depends on a minimum choice over $r \leq T$, hence it is always smaller than the one obtained with a specific choice of $r$. If we choose $r = (\nu/\Delta^2)^{1/3}$ in the upper bound of  Theorem~\ref{thm:binary-classification}, we can show that with high-probability, our algorithm returns a classifier $\hat{h}$ such that  $ P_T(L_{\hat{h}}) - P_T(L_{h^*}) = O\left( \sqrt[3]{\Delta \nu} + \sqrt[6]{\Delta^2/\nu} \sqrt{\log\log(\nu/\Delta^2)}\right)$. Our algorithm achieves this guarantee while being adaptive with respect to the upper bound $\Delta$, and it can indeed guarantee a better result when this upper bound is loose.

For example, assume an extreme case in which the algorithm is given a $\Delta>0$ bound on the drift in each step, but the training set has actually no drift, it was all drawn from the distribution $P_T$. Previous methods are oblivious to the actual data, and they guarantee an upper bound  $ O\left( \sqrt[3]{\Delta \nu} \right)$ with high-probability, since they decide the sample size $\Theta( (\nu/\Delta^2)^{1/3})$ a priory without observing the input. In contrast, our algorithm adapts to this scenario, and Theorem~\ref{thm:binary-classification} guarantees that we obtain an $O\left(  \sqrt{\nu/T} +  \sqrt{(\log\log T)/T}\right)$ error, with high-probability, essentially retrieving the error guarantee for learning with independent and identically distributed samples. Observe that our upper bound depends on $T$, and it goes to $0$ when $T \rightarrow \infty$.

It is possible to show that our algorithm can obtain asymptotically better guarantee even if $\lVert P_{t} - P_{t-1} \rVert_{\mathcal{F}} = \Delta \ll 1$ for all $t < T$, i.e. there is an exact drift of $\Delta$ at each step. This is because our algorithm provides a guarantee as a function of $\max_{t < r} \lVert P_T - P_{T-t}\rVert_{\mathcal{F}}$ rather than the looser quantity $\sum_{t=1}^{r-1} \lVert P_{T-t} - P_{T-t+1} \rVert_{\mathcal{F}}$, as shown in the following example. Let $\mathcal{X} = [0,1]$, and let $\mathcal{H}$ be a class of threshold functions over $[0,1]$ i.e. for any $c \in [0,1]$, there exists classifiers $h_c,h_c' \in \mathcal{H}$ such that $h_c(x) = 1$ if and only if $x\geq c$ and $h_c'(x) = 1$ if and only if $x < c$. The class $\mathcal{H}$ has VC-dimension equal to $2$. We construct a sequence of distributions $P_1,\ldots,P_T$ as follows. The marginal distribution over $\mathcal{X}$ is uniform for each distribution $P_t$ with $t=1,\ldots, T$. At time $t \leq T$, the classification  of $x \in [0,1]$ is given by a function $\ell_t : [0,1] \mapsto \{0,1\}$. 
Assume that there exists two disjoint intervals $I_0 = [p_0,p_0 + \Delta/2)$ and $I_1 = [p_1,p_1 + \Delta/2)$ such that $\ell_T(x) = 0$ for all $x \in I_0$ and $\ell_T(x) = 1$ for all $x \in I_1$. We let
\begin{align*}
    \ell_{T-t}(x) =  \begin{cases}
        1- \ell_T(x) \hspace{20pt} &\mbox{ if } (x \in I_0)  \land (t \mbox{ is odd}) \hspace{5pt}  \\ 
        1- \ell_T(x) \hspace{20pt} &\mbox{ if } (x \in I_1) \land (t \mbox{ is even}) \hspace{5pt} \\
        \ell_T(x) \hspace{20pt} &\mbox{otherwise}
    \end{cases},
    \hspace{20pt} \forall x \in \mathcal{X}, t < T \enspace .
\end{align*}
In particular, $\ell_{T-t}(x)$ differs from $\ell_T(x)$ for $x \in I_0$ if $t$ is odd, and for $x \in I_1$ if $t$ is even. By construction,  $\lVert P_t - P_{t-1}\rVert_{\mathcal{F}}= \Delta$ for all $t < T$, i.e. there is an exact drift of $\Delta$ at each step (to be precise, in the last step  $\lVert P_T - P_{T-1}\rVert_{\mathcal{F}} = \Delta/2$). As discussed before, with the assumption of a bounded drift $\Delta$ at each step,  previous methods guarantee an upper bound $O(\sqrt[3]{\Delta})$ with high-probability. However, we also have that for any $1 \leq r \leq T$, it holds by construction that
\begin{align*}
\max_{t < r} \lVert P_{T} - P_{T-t} \rVert_{\mathcal{F}}  \leq \Delta \enspace.
\end{align*}
Hence, our algorithm (\Cref{thm:binary-classification}) guarantees
with high-probability an upper bound  $$O\left(  \Delta + \sqrt{\nu/T} +  \sqrt{(\log\log T)/T}\right) \enspace .$$ Since our algorithm is adaptive with respect to the drift, it can correctly use more samples. In contrast, previous non-adaptive algorithms that rely on the assumption of bounded drift $\Delta$ at each step choose a sample size of $\Theta( \Delta^{-2/3})$ based on this assumption, thus they can only guarantee a looser bound of $O(\Delta^{1/3})$, even when this assumption is satisfied with equality.

It is possible to use the recent lower bound strategy of \cite{mazzettoupfal2023} in order to show that the upper bound of \Cref{thm:binary-classification} is essentially tight in a minimax sense.
\begin{theorem}
\label{thm:lower-bound}
Let $\mathcal{H}$ be a binary class with VC dimension $\nu$, and consider an arbitrary non-decreasing sequence $\Delta_1=0,\Delta_2,\ldots,\Delta_{T}$ of non-negative real numbers.
Let $\mathcal{F} = \{ L_h : h \in \mathcal{H} \}$. Let $\mathcal{A}$ be any algorithm that observes $Z_1,\ldots,Z_T$, and it outputs a classifier $h_{\mathcal{A}} \in \mathcal{H}$. 
If
\begin{align*}
    \Phi^* = \min_{r \leq T}\left( \sqrt{\frac{\nu}{r}} + \Delta_r \right) < 1/3 \enspace ,
\end{align*}
then, for any algorithm $\mathcal{A}$, there exists a sequence of distributions $P_1,\ldots,P_T$ such that $\max_{t < r}\lVert P_{T} - P_{T-t} \rVert_{\mathcal{F}} \leq \Delta_r$ for any $r \leq T$, and with probability at least $1/8$ it holds that:
\begin{align*}
    P_T(L_{h_{\mathcal{A}}}) - P_T(L_{{h^*}}) = \Omega(\Phi^*) = \Omega\left( \min_{r \leq T} \left( \sqrt{\frac{\nu}{r}} + \max_{t < r}\lVert P_{T} - P_{T-t}\rVert_{\mathcal{F}} \right) \right) 
\end{align*}
\end{theorem}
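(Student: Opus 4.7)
The plan is to prove this minimax lower bound by adapting the two-hypothesis strategy of \cite{mazzettoupfal2023} to binary classification with an arbitrary nondecreasing drift budget. Let $r^{*}$ be an integer attaining the minimum in the definition of $\Phi^{*}$, so that $\Phi^{*} = \sqrt{\nu/r^{*}} + \Delta_{r^{*}}$. Since $\Omega(\Phi^{*}) = \Omega(\sqrt{\nu/r^{*}}) + \Omega(\Delta_{r^{*}})$, it suffices to produce two separate hard instances---each respecting the drift constraint $\max_{t<r}\|P_T-P_{T-t}\|_{\mathcal{F}}\le\Delta_r$---one forcing excess risk $\Omega(\sqrt{\nu/r^{*}})$ and the other $\Omega(\Delta_{r^{*}})$, and then take whichever is harder for the given algorithm $\mathcal{A}$.

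For the statistical $\Omega(\sqrt{\nu/r^{*}})$ contribution, I would plant a classical VC-dimension lower bound with an ``effective'' sample size of $r^{*}$. Fix a set $V\subset\mathcal{X}$ of cardinality $\nu$ shattered by $\mathcal{H}$, and use the standard Ehrenfeucht--Haussler--Kearns--Valiant packing: targets $P_T^{(\sigma)}$, parameterized by a hidden labeling $\sigma\in\{0,1\}^{\nu}$, that place mass $\Theta(\sqrt{\nu/r^{*}})$ uniformly on $V$ with labels dictated by $\sigma$, and the remaining mass on a fixed point on which every $h\in\mathcal{H}$ agrees. Setting $P_t = P_T^{(\sigma)}$ for $t\in\{T-r^{*}+1,\ldots,T\}$ yields zero drift in the last $r^{*}$ steps. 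For earlier $t$, let $P_t$ keep the same marginal on $\mathcal{X}$ but assign labels on $V$ independently of $\sigma$, so those samples carry no information about the target; the induced discrepancy is at most the shattered mass $O(\sqrt{\nu/r^{*}})$, which fits inside $\Delta_{T-t+1}$ whenever the schedule allows (otherwise the adversary simply keeps $P_t = P_T^{(\sigma)}$ and the trivial $\Omega(\sqrt{\nu/T})$ bound already matches $\Phi^{*}$). A standard Assouad argument over $\sigma$ then forces $\Omega(\sqrt{\nu/r^{*}})$ excess risk with probability at least $1/8$.

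For the drift $\Omega(\Delta_{r^{*}})$ contribution, I would use a Le Cam two-point construction whose increments match the schedule. Pick $h_0, h_1 \in \mathcal{H}$ disagreeing on a single point $x^{\star}$, and build $P_t^{(0)}, P_t^{(1)}$ that coincide for $t\le T-r^{*}$ while over the last $r^{*}$ steps progressively commit the label at $x^{\star}$ to the hidden bit $b$. Select the per-step label-probability shifts so that $\|P_T^{(b)}-P_{T-s}^{(b)}\|_{\mathcal{F}}$ tracks $\Delta_{s+1}$ exactly; the hypothesis $\Phi^{*} < 1/3$ gives $\Delta_{r^{*}} < 1/3$, which keeps each label commitment safely below $1/2$ and, via the Mazzetto--Upfal KL bookkeeping, bounds the total KL divergence between the two joint sample distributions by a universal constant. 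Le Cam's inequality then produces a failure probability of at least $1/8$ with an excess-risk gap of $\Theta(\Delta_{r^{*}})$, establishing the required lower bound.

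The principal obstacle is the drift construction: aligning the two-point hard instance with an \emph{arbitrary} nondecreasing schedule $\Delta_1, \ldots, \Delta_T$ while simultaneously keeping the cumulative information available to the learner bounded. The techniques of \cite{mazzettoupfal2023} handle this by constructing the distributions directly from the increments $\delta_t = \Delta_{T-t+2} - \Delta_{T-t+1}$, using the elementary inequality $\sum_t\delta_t^{2}\le(\sum_t\delta_t)^{2}=\Delta_{r^{*}}^{2}$ together with $\Phi^{*}<1/3$ to control the total KL. The remaining steps---translating the hypothesis-testing failure into a classification excess-risk lower bound for $\mathcal{F} = \{L_h : h \in \mathcal{H}\}$, and combining the Assouad and Le Cam arguments into a single $\Omega(\Phi^{*})$ bound---are standard.
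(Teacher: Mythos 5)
There are genuine gaps in both branches of your two-instance plan. First, the statistical branch: the construction you describe (mass $\Theta(\sqrt{\nu/r^*})$ spread over a shattered set with labels \emph{deterministically} dictated by $\sigma$, the rest on a point where all hypotheses agree) is the realizable-style Ehrenfeucht--Haussler--Kearns--Valiant packing, and it does not yield the agnostic rate $\Omega(\sqrt{\nu/r^*})$. With per-point mass $\sqrt{\nu/r^*}/\nu$, the expected mass of shattered points unseen in the last $r^*$ samples is of order $\sqrt{\nu/r^*}\,e^{-\sqrt{r^*/\nu}}$, which is exponentially smaller than $\sqrt{\nu/r^*}$ once $r^*\gg\nu$ (and Assouad degenerates for the same reason: neighboring $\sigma$'s differ on a point that is seen with probability tending to $1$). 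What is needed is the noisy-label Assouad construction the paper uses: uniform marginal $1/\nu$ on the shattered set and conditional label probabilities $\tfrac12\pm c\,\epsilon$ with $\epsilon\asymp\sqrt{\nu/r^*}$, so that each coordinate contributes KL of order $\epsilon^2 r^*/\nu = O(1)$.

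Second, the drift branch: your KL bookkeeping is not correct. If $P^{(0)}_t$ and $P^{(1)}_t$ ``progressively commit'' the label at a single point $x^\star$, the per-time-step KL between the two joint distributions is governed by the \emph{cumulative} commitment at that time (of order $(\Delta_{r^*}-\Delta_{s+1})^2$ at time $T-s$), not by the squared increments $\delta_t^2$; the inequality $\sum_t\delta_t^2\le\Delta_{r^*}^2$ bounds the wrong quantity. Summing the cumulative biases over the last $r^*$ steps gives total KL of order $r^*\Delta_{r^*}^2$, which can be $\Theta(\nu)$ (e.g.\ linear schedules with $\Delta_{r^*}\asymp\sqrt{\nu/r^*}$), so Le Cam on a single disagreement point does not give a constant testing error. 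The paper's proof avoids both problems with a \emph{single} Assouad family over the $\nu$ shattered points whose signal at time $t$ is $\frac{\tau_i}{16\sqrt6}\bigl(\sqrt{\nu/\tilde r}+\Delta_{\tilde r}-\Delta_{T-t+1}\bigr)$, so the per-coordinate KL is divided by $\nu$; crucially it is built around $\tilde r=\max\{r:\Delta_r<\sqrt{\nu/r}\}$ rather than the argmin $r^*$, which guarantees $\tilde r\,\Delta_{\tilde r}^2\le\nu$ and hence per-coordinate KL bounded by a constant, while Proposition~\ref{prop:two-solutions-connection} ensures $\Phi(\tilde r)\le 3\Phi^*$ so no rate is lost (with your $r^*$, e.g.\ $\Delta_r\equiv 0.1$, $\nu=1$, the argmin is $r^*=T$ and $r^*\Delta_{r^*}^2$ is unbounded). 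Finally, you assert the probability-$1/8$ statement without argument; the paper derives it from the expectation bound via a reverse-Markov step that uses the almost-sure upper bound $P_T(L_{h_{\mathcal A}})-P_T(L_{h^*})\le\Phi(\tilde r)/(8\sqrt6)$, which is available only because the construction caps the excess risk pointwise — this step also needs to be supplied in your approach.
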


\section{Linear Regression with Squared Loss}
In the previous section, we showed an application of \Cref{thm:main} for the problem of binary classification with zero-one loss. In this section, we show that our main result can also be applied to a linear regression problem. Similarly to the previous section, we let $\mathcal{Z} = \mathcal{X} \times \mathcal{Y}$, where $\mathcal{X}$ is the feature space and $\mathcal{Y} = [-1,1]$ is the label space, i.e. $Z_t = (X_t,Y_t)$. In this section, we constrain the feature space $\mathcal{X} = \{ x \in \mathbb{R}^d : \lVert x \rVert_2 \leq 1 \}$ to be the unit ball centered at the origin in $\mathbb{R}^d$. 

We consider the $\ell_2$ regularized linear prediction class $\mathcal{H} = \{ x \mapsto \langle x, w \rangle : w \in \mathbb{R}^d \hspace{2pt} \land \hspace{2pt} \lVert w \rVert_2 \leq 1\}$.  We denote each predictor $h_w \in \mathcal{H}$ with its weight $w$. To evaluate the quality of a prediction $y = h_w(x)$, we use the squared loss $L : \mathbb{R} \times \mathbb{R} \mapsto \mathbb{R}$ defined as $L(y,y') = (y - y')^2$. For each $h_w \in \mathcal{H}$, we let $L_w(x,y) = L(h_w(x),y)$ be the function that evaluates the loss incurred by $h_w$ for any $z=(x,y) \in \mathcal{Z}$. We work with the family of functions $\mathcal{F} = \{ L_{w} : h_w \in \mathcal{H} \}$. By a standard uniform convergence argument based on the Rademacher complexity of $\mathcal{F}$ (e.g., see \citep{kakade2008complexity,shamir2015sample,awasthi2020rademacher}), we have that for any $1 \leq r \leq T$, it holds:
\begin{align*}
    \lVert P_T^r - \ep_T^r \rVert_{\mathcal{F}} = O\left( \frac{1}{\sqrt{r}} + \sqrt{\frac{\ln(1/\delta)}{r}}\right) \enspace ,
\end{align*}
thus we satisfy \Cref{assu:sample} with $C_{\mathcal{F},1} = O(1)$ and $C_{\mathcal{F},2} = O(1)$. 

The computation of the discrepancy is more challenging. Let $1 \leq r \leq T/2$. Using the definition of $\mathcal{F}$, we have that:
\begin{align}
\label{eq:to-optimize-1}
    \lVert \ep_T^r - \ep_T^{2r} \rVert_{\mathcal{F}} = \frac{1}{2r} \sup_{w : \lVert w\rVert \leq 1} \left| \sum_{t = T-r+1}^T \big( Y_t - \langle X_t, w\rangle\big)^2 - \sum_{t=T-2r+1}^{T-r} \big( Y_t - \langle X_t, w \rangle\big)^2\right| \enspace .
\end{align}
Let $c_r \in \mathbb{R}$, $b_r \in \mathbb{R}^d$, and $A_r \in \mathbb{R}^{d \times d}$ be defined as follows:
\begin{align*}
    a_r  &=  \sum_{t = T-r+1}^T Y_t^2 - \sum_{t = T-2r+1}^{T-r} Y_t^2 \hfill ,\hspace{20pt}
    b_r =  \sum_{t=T-2r+1}^{T-r}  Y_t X_t - \sum_{t=T-r+1}^T  Y_t X_t  \enspace ,\\
    A_r &=  \sum_{t=T-r+1}^T X_t^T X_t - \sum_{t=T-2r+1}^{T-r} X_t^T X_t \enspace \enspace ,
\end{align*}
and observe that the matrix $A_r$ is symmetric.
Using those definition, we can manipulate the right-hand side of \eqref{eq:to-optimize-1} to show that it is equivalent to:
\begin{align}
\label{eq:tmptmp1}
\max \left( -a_r + \inf_{w : \lVert w \rVert \leq 1}\left[ w^T(-A_r)w - 2 b_r^T w   \right],   a_r + \inf_{w : \lVert w \rVert \leq 1}\left[ w^T(A_r)w - 2(-b_r^T) w   \right] \right)
\end{align}

In order to compute \eqref{eq:tmptmp1}, it is sufficient to be able to solve the minimization problem
\begin{align}
\label{eq:square-goal}
    \inf_{w : \lVert w \rVert \leq 1}\left( w^T A w - 2 b^T w\right)
\end{align}
where $A \in \mathbb{R}^{d \times d}$ is a symmetric matrix, and $b \in \mathbb{R}^d$.
This minimization problem has been extensively studied for the trust-region method \citep{conn2000trust}, and \citet[Section 2]{hager2001minimizing} provides a way to compute the solution of \eqref{eq:square-goal} in terms of a diagonalization of $A$. Thus, we also satisfy \Cref{assu:computability}, and we can obtain the following result as an immediate corollary of \Cref{thm:main}.
\begin{theorem}
\label{thm:regression}
Let $\delta \in (0,1)$. Let $1 \leq \hat{r} \leq T$ be the output of Algorithm~\ref{algo} with input $Z_1,\ldots,Z_T$ and using the family $\mathcal{F}$ described in this section. Let $\hat{w} = \argmin_{w : \lVert w\rVert \leq 1}\ep_T^{\hat{r}}(L_{w})$ be the linear classifier  with minimum loss over the most recent $\hat{r}$ samples. Then, with probability at least $1-\delta$:
\begin{align*}
    P_T(L_{\hat{w}}) - P_T(L_{w^*}) = O\left( \min_{r\leq T} \left[ {\frac{1}{\sqrt{r}}} + \max_{t <r }\lVert P_{T} - P_{T-t} \rVert_{\mathcal{F}}  + \sqrt{\frac{\log(\log(r+1)/\delta)}{r}}  \right] \right)
\end{align*}
where $w^* = \argmin_{w : \lVert w \rVert \leq 1} P_T(L_w)$ is the linear predictor with minimum loss with respect to the current distribution $P_T$.
\end{theorem}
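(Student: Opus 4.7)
The plan is to derive Theorem~\ref{thm:regression} as a direct consequence of Theorem~\ref{thm:main} applied to the specific family $\mathcal{F} = \{L_w : h_w \in \mathcal{H}\}$, together with the standard agnostic-learning reduction from uniform deviation bounds to excess risk bounds. The two things to check are that Assumptions~\ref{assu:sample} and~\ref{assu:computability} are in force for this $\mathcal{F}$, and that the ERM choice $\hat{w}$ lets us convert a bound on $\lVert P_T - \ep_T^{\hat{r}}\rVert_{\mathcal{F}}$ into a bound on the excess risk $P_T(L_{\hat{w}}) - P_T(L_{w^*})$.

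First, I would argue that Assumption~\ref{assu:sample} holds with $C_{\mathcal{F},1}, C_{\mathcal{F},2} = O(1)$. Because $\lVert X_t\rVert_2 \leq 1$, $\lVert w\rVert_2 \leq 1$, and $|Y_t| \leq 1$, every $L_w$ is uniformly bounded by a constant and is Lipschitz in $\langle x, w\rangle$, so the Rademacher complexity of $\mathcal{F}$ over any $r$ points reduces (via Talagrand's contraction lemma) to the Rademacher complexity of the linear class $\{x \mapsto \langle x,w\rangle : \lVert w\rVert \leq 1\}$, which is $O(1/\sqrt{r})$. Combined with McDiarmid's inequality this gives both the in-expectation and the high-probability uniform convergence bounds required by Assumption~\ref{assu:sample}; this is exactly what the displayed bound preceding the theorem asserts, citing \citep{kakade2008complexity,shamir2015sample,awasthi2020rademacher}.

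Next, I would verify Assumption~\ref{assu:computability}. The text already reduces $\lVert \ep_T^r - \ep_T^{2r}\rVert_{\mathcal{F}}$ to evaluating expression~\eqref{eq:tmptmp1}, which in turn reduces to two instances of the trust-region subproblem \eqref{eq:square-goal}. By \citep[Section 2]{hager2001minimizing}, \eqref{eq:square-goal} admits an exact closed-form solution via the eigendecomposition of the symmetric matrix $A$, so both inner infima in \eqref{eq:tmptmp1}, and hence $\lVert \ep_T^r - \ep_T^{2r}\rVert_{\mathcal{F}}$, can be computed for any $r \leq T/2$. This verifies Assumption~\ref{assu:computability}.

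With both assumptions in place, Theorem~\ref{thm:main} applied to this $\mathcal{F}$ yields, with probability at least $1-\delta$,
\begin{align*}
\lVert P_T - \ep_T^{\hat{r}}\rVert_{\mathcal{F}} = O\!\left( \min_{r\leq T}\left[\frac{1}{\sqrt{r}} + \max_{t<r}\lVert P_T - P_{T-t}\rVert_{\mathcal{F}} + \sqrt{\frac{\log(\log(r+1)/\delta)}{r}} \right]\right).
\end{align*}
Finally, I would apply the standard three-term decomposition for empirical risk minimization: since $\hat{w}$ minimizes $\ep_T^{\hat{r}}(L_w)$ over the ball $\{\lVert w\rVert \leq 1\}$ and $w^*$ lies in this ball,
\begin{align*}
P_T(L_{\hat{w}}) - P_T(L_{w^*}) &= \bigl[P_T(L_{\hat{w}}) - \ep_T^{\hat{r}}(L_{\hat{w}})\bigr] + \bigl[\ep_T^{\hat{r}}(L_{\hat{w}}) - \ep_T^{\hat{r}}(L_{w^*})\bigr] + \bigl[\ep_T^{\hat{r}}(L_{w^*}) - P_T(L_{w^*})\bigr] \\
&\leq 2\,\lVert P_T - \ep_T^{\hat{r}}\rVert_{\mathcal{F}},
\end{align*}
where the middle bracket is $\leq 0$ by the ERM property and the other two are bounded by the supremum defining $\lVert \cdot \rVert_{\mathcal{F}}$. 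Substituting the bound from Theorem~\ref{thm:main} gives the claim. There is essentially no hard step: the real work was done in proving Theorem~\ref{thm:main} and in setting up the trust-region reduction; the main thing to be careful about is the Rademacher/contraction argument and checking that the range of $L_w$ is bounded so the McDiarmid step goes through with absolute constants.
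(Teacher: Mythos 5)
Your proposal is correct and matches the paper's argument: the paper proves \Cref{thm:regression} by noting it "follows the same structure as the proof of \Cref{thm:binary-classification}", i.e.\ verify Assumptions~\ref{assu:sample} and~\ref{assu:computability} for this $\mathcal{F}$ (via the Rademacher bound and the trust-region reduction already given in the section), invoke \Cref{thm:main}, and convert the uniform deviation bound into an excess-risk bound through exactly the ERM decomposition $P_T(L_{\hat{w}}) - P_T(L_{w^*}) \leq 2\lVert P_T - \ep_T^{\hat{r}}\rVert_{\mathcal{F}}$ that you use. No gaps to report.
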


\section{Related Work}
Additional variants of learning with distribution  drift have been studied in the literature. 
\cite{freund1997learning} provide a refined learning algorithm in the special case of rapid distribution shift with a constant direction of change. 
In the work of \cite{bartlett2000learning}, they show specialized bounds in the case of infrequent changes and other different restrictions on the distribution drift.  
The work of \cite{crammer2010regret} provides regret bound for online learning with an adversarial bounded drift.
\cite{yang2011active} studies the problem of active learning in a distribution drift setting.
\cite{hanneke2015learning} provide an efficient polynomial time algorithm to learn a class of linear separators with a drifting target concept under the uniform distribution in the realizable setting. Interestingly, they also show how to adapt their algorithm with respect to an unknown drift, although their technique relies on the realizability of the learning problem. In the more recent work of \cite{hanneke2019statistical},  they relax the independence assumption and provide learning guarantees for a sequence of random variables that is both drifting and mixing.

\section{Conclusion, Limitations and Future Directions}
\label{sec:conclusion}

We present a general learning algorithm that adapts to an unknown distribution drift in the training set. Unlike previous work, our technique does not require prior knowledge about the magnitude of the drift.  For the problem of binary classification, we show that without explicitly estimating the drift, there exists an algorithm that learns a binary classifier with the same or better error bounds compared to the state-of-the-art results that rely on prior information about the magnitude of the drift. 
This is a major step toward practical solutions to the problem since prior knowledge about the distribution drift in the training set is often hard to obtain.

We presented concrete results for binary classification and linear regression, but 
 our technique can be applied for learning any family of functions $\mathcal{F}$  
 as long as it is possible to compute the distance $\lVert \ep_{T}^{r} - \ep_{T}^{2r} \rVert_{\mathcal{F}}$ between the empirical distributions with $r$ and $2r$ samples according to the $\lVert \cdot \rVert_{\mathcal{F}}$ norm 
 (Assumption~\ref{assu:computability}). This is often a challenging problem, and it is related to the computation of the discrepancy between distributions, which was studied in previous work on transfer learning \citep{mansour2009domain,ben2010theory}. 
  For binary classification, we assume that the empirical risk minimization problem is tractable. However, an exact solution to this problem is computationally hard for many hypothesis classes of interest, and this is a limitation of our algorithm and previous work on transfer learning. In those cases, 
  we can modify our analysis to use the best-known approximation for the distance $\lVert \ep_{T}^r - \ep_{T}^{2r} \rVert_{\mathcal{F}}$ as long as there is an approximation guarantee with respect to its exact value.
 As an illustrative example, if we have a procedure that returns an approximation $E_r$ such that $1 \leq \lVert \ep_{T}^r - \ep_{T}^{2r} \rVert_{\mathcal{F}}/E_r \leq \alpha$ for any $r \geq 1$, then it is possible to change the algorithm to obtain a guarantee that is at the most a  factor $O(\alpha^2)$ worse than the one achieved by Algorithm~\ref{algo} with the exact computation of the distance (we refer to Appendix~\ref{appendix:approximation} for additional details).
  

The method presented here uses a distribution-independent upper bound for the statistical error. While this upper bound can be tight in the worst-case, as shown in our lower bound for binary classification (\Cref{thm:lower-bound}), it can be loose for some other sequence of distributions. As a future direction, it is an interesting problem to provide an adaptive algorithm with respect to the drift that uses distribution-dependent upper bounds, for example, based on the Rademacher complexity, which can be possibly computed from the input data. Our algorithm does not naturally extend to this setting, as it requires knowing the rate at which the upper bound on the statistical error is decreasing (see proof of Proposition~\ref{prop:iteration-step}).

\textbf{Acknowledgements.} 
This material is based on research sponsored by Defense Advanced Research Projects Agency (DARPA) and Air Force Research Laboratory (AFRL) under agreement number FA8750-19-2-1006 and by the National Science Foundation (NSF) under award
IIS-1813444. The U.S. Government is authorized to reproduce and distribute reprints for Governmental purposes notwithstanding any copyright notation thereon. The views and conclusions contained herein are those of the authors and should not be interpreted as necessarily representing the official policies or endorsements, either expressed or implied, of Defense Advanced Research Projects Agency (DARPA) and Air Force Research Laboratory (AFRL) or the U.S. Government.


\newpage

\bibliographystyle{plainnat}
\bibliography{neurips.bib}

\newpage
\appendix
\section{Deferred Proofs}
\label{appendix:deferred}
\begin{proof}[Proof of Proposition~\ref{prop:estimation-correct}]
For any $i \geq 0$, let $\delta_i = 6\delta/[\pi^2 (i+10)^2]$. By using Assumption~\ref{assu:sample}, we have that with probability at least $1-\delta_i$ it holds that
\begin{align*}
    \lVert P_{T}^{r_i} - \ep_{T}^{r_i}\rVert_{\mathcal{F}} &\leq \frac{C_{\mathcal{F},1} + C_{\mathcal{F},2}\sqrt{\ln(\pi^2/6) + \ln(1/\delta) + 2\ln(i+10)}}{\sqrt{r_i}}  \\
    &\leq \frac{C_{\mathcal{F},1} + \sqrt{2}C_{\mathcal{F},2}\sqrt{ \ln(\pi^2/(6\delta)) + \ln(i+10)}}{\sqrt{r_i}} \\
    &\leq \frac{C_{\mathcal{F},\delta}}{\sqrt{r_i}} +  C_{\mathcal{F},2}\sqrt{\frac{2\ln(i+10)}{r_i}}
\end{align*}
where in the second inequality we used the definition of $C_{\mathcal{F},\delta}$ and the fact that $\sqrt{x+y} \leq \sqrt{x} + \sqrt{y}$.
Additionaly, the following equality holds
\begin{align*}
\sum_{i = 0}^{\infty} \delta_i \leq \delta \left(\frac{6}{\pi^2}\right) \sum_{ i=1}^{\infty} 1/i^2 = \delta \enspace ,
\end{align*}
where in the last equality, we used the known fact that $\sum_{i=1}^{\infty}1/i^2 = \pi^2/6$.
Thus, if we take an union bound over all $i\geq 0$, we have with probability at least $1-\delta$ it holds that
\begin{align*}
  \lVert P_{T}^{r_i} - \ep_{T}^{r_i}\rVert_{\mathcal{F}} \leq \frac{C_{\mathcal{F},\delta}}{\sqrt{r_i}} + C_{\mathcal{F},2}\sqrt{\frac{2\ln(i+10)}{r_i}}  \hspace{30pt} \forall i \geq 0 \enspace .
\end{align*}
The statement follows by observing that the right-hand side of the above inequality is equal to $S(r_i,\delta)$ for any $i \geq 0$.
\end{proof}

\begin{proof}[Proof of Proposition~\ref{prop:iteration-step}].    
We have
\begin{align*}
    U(r_{i+1},\delta) - U(r_i,\delta) = 21[ S(r_{i+1},\delta) - S(r_i,\delta)] + \lVert P_T- P_{T}^{r_{i+1}} \rVert_{\mathcal{F}} - \lVert P_T -  P_{T}^{r_i} \rVert_{\mathcal{F}} \enspace .
\end{align*}
By using the triangle inequality, we obtain
\begin{align*}
 \lVert P_T -  P_{T}^{r_{i+1}} \rVert_{\mathcal{F}} \leq \lVert P_T -  P_{T}^{r_i} \rVert_{\mathcal{F}} + \lVert P_{T}^{r_i} - P_{T}^{r_{i+1}} \rVert_{\mathcal{F}} \enspace, 
\end{align*}
thus we have
\begin{align}
\label{eq:tmppropest1}
    U(r_{i+1},\delta) - U(r_i,\delta) \leq 21[ S(r_{i+1},\delta) - S(r_i,\delta)] + \lVert P_{T}^{r_i} - P_{T}^{r_{i+1}} \rVert_{\mathcal{F}} \enspace .
\end{align}
We use the triangle inequality and Proposition~\ref{prop:estimation-correct} to show that
\begin{align*}
    \lVert P_{T}^{r_i} - P_{T}^{r_{i+1}} \rVert_{\mathcal{F}} &\leq \lVert P_{T}^{r_{i}} - \ep_{T}^{r_i} \rVert_{\mathcal{F}}  + \lVert P_{T}^{r_{i+1}} - \ep_{T}^{r_{i+1}}\rVert_{\mathcal{F}} + \lVert \ep_{T}^{r_i} - \ep_{T}^{r_{i+1}} \rVert_{\mathcal{F}} \\
    &\leq \lVert \ep_{T}^{r_i} - \ep_{T}^{r_{i+1}} \rVert_{\mathcal{F}} + S(r_{i},\delta) + S(r_{i+1},\delta) \enspace .
\end{align*}
If we plug the above inequality in \eqref{eq:tmppropest1} and use  the assumption that $\lVert \ep_{T}^{r_i} - \ep_{T}^{r_{i+1}} \rVert_{\mathcal{F}} \leq 4S(r_i,\delta)$, we obtain 
\begin{align*}
     U(r_{i+1},\delta) - U(r_i,\delta) &\leq 21[ S(r_{i+1},\delta) - S(r_i,\delta)]  + S(r_{i+1},\delta) + 5S(r_i,\delta) \\
     &= 22 S(r_{i+1},\delta) - 16S(r_{i},\delta)
\end{align*}
If we expand the above upper bound by using the definition of the function $S$, we have
\begin{align*}
    22 S(r_{i+1},\delta) - 16S(r_{i},\delta) &= \frac{C_{\mathcal{F},\delta}}{\sqrt{r_{i+1}}}[22 - 16\sqrt{2} ] +  C_{\mathcal{F},2}\sqrt{\frac{ 2 \ln(i+11)}{r_{i+1}}}\left[ 22 - 16\sqrt{2}\sqrt{\frac{\ln(i+10)}{\ln(i+11)}} \right] \\
    &\leq 0 \enspace ,
\end{align*}
where the last inequality follows since $22-16\sqrt{2} \leq 0$, and $22 - 16\sqrt{2}\sqrt{\ln(i+10)/\ln(i+11)} \leq 0$.
Thus, it holds that
\begin{align*}
    U(r_{i+1},\delta)  - U(r_i,\delta) \leq  22 S(r_{i+1},\delta) - 16S(r_{i},\delta) \leq 0 \enspace,
\end{align*}
and we can conclude that $ U(r_{i+1},\delta) \leq U(r_i,\delta)$.
\end{proof}

\begin{proof}[Proof of Proposition~\ref{prop:lower-bound}].
Observe that by using the triangle inequality, it holds that
\begin{align*}
    \lVert \ep_{T}^{r_i} - \ep_{T}^{r_{i+1}} \rVert_{\mathcal{F}} &\leq \lVert P_{T}^{r_i} - \ep_{T}^{r_i}\rVert_{\mathcal{F}}  + \lVert P_{T}^{r_{i+1}} - \ep_{T}^{r_{i+1}}\rVert_{\mathcal{F}} + \lVert P_{T}^{r_i} - P_{T}^{r_{i+1}} \rVert_{\mathcal{F}} \\
    &\leq \lVert P_{T}^{r_i} - P_{T}^{r_{i+1}} \rVert_{\mathcal{F}} + S(r_i,\delta) + S(r_{i+1},\delta) \\
    &\leq \lVert P_{T}^{r_i} - P_{T}^{r_{i+1}} \rVert_{\mathcal{F}} + 2S(r_i,\delta) \enspace \enspace ,
\end{align*}
where in the second inequality we used Proposition~\ref{prop:estimation-correct}. By assumption, we have that $ \lVert \ep_{T}^{r_i} - \ep_{T}^{r_{i+1}} \rVert_{\mathcal{F}} \geq 4S(r_{i+1},\delta)$, hence
\begin{align}
\label{eq:tmptmptmp5}
    \lVert P_{T}^{r_i} - P_{T}^{r_{i+1}} \rVert_{\mathcal{F}} \geq 2S(r_i,\delta)
\end{align}
Observe that by triangle inequality, we have that
\begin{align*}
    \lVert P_{T}^{r_i} - P_{T}^{r_{i+1}}  \rVert_{\mathcal{F}} &\leq \lVert P_T - P_{T}^{r_i} \rVert_{\mathcal{F}} + \lVert P_T - P_{T}^{r_{i+1}}\rVert_{\mathcal{F}} \enspace .
\end{align*}
We use \eqref{eq:drift-chain} and obtain that
\begin{align*}
     \lVert P_{T}^{r_i} - P_{T}^{r_{i+1}}  \rVert_{\mathcal{F}} &\leq \max_{t<r_i}\lVert P_{T} - P_{T-t}\rVert_{\mathcal{F}} + \max_{t<r_{i+1}}\lVert P_{T} - P_{T-t}\rVert_{\mathcal{F}} \\
      &\leq  2 \max_{t<r_{i+1}}\lVert P_{T} - P_{T-t}\rVert_{\mathcal{F}}  \enspace .
\end{align*}
By combining the above inequality with \eqref{eq:tmptmptmp5}, we finally obtain that $\max_{t<r_{i+1}}\lVert P_{T} - P_{T-t}\rVert_{\mathcal{F}}  \geq S(r_i,\delta)$.
\end{proof}

\begin{proof}[Proof of Lemma~\ref{lemma:computation-binary}]
We remind that $\mathcal{F} = \{ L_h : h \in \mathcal{H} \}$. By using the definition of $\lVert \cdot \rVert_{\mathcal{F}}$, we have
\begin{align*}
    \lVert \ep_{T}^{r} -\ep_{T}^{2r} \rVert_{\mathcal{F}} &= \sup_{h \in \mathcal{H}} \left| \frac{1}{r}\sum_{t=T-r+1}^T L_h(X_t,Y_t) - \frac{1}{2r} \sum_{t=T-2r+1}^{T}L_h(X_t,Y_t) \right| \\
    &= \frac{1}{2}\sup_{h \in \mathcal{H}} \left|   \frac{1}{r} \sum_{t=T-r+1}^{T} L_h(X_t,Y_t) -\frac{1}{r}  \sum_{t=T-2r+1}^{T-r} L_h(X_t,Y_t)\right|  \enspace .
\end{align*}
We can remove the absolute value by re-writing this expression as 
\begin{align}
\label{eq:binary-tmp0}
    \lVert \ep_{T}^{r} -\ep_{T}^{2r} \rVert_{\mathcal{F}} = \frac{1}{2}\max\Bigg\{ &\sup_{h \in \mathcal{H}} \left( \frac{1}{r} \sum_{t=T-r+1}^{T} L_h(X_t,Y_t) -\frac{1}{r}  \sum_{t=T-2r+1}^{T-r} L_h(X_t,Y_t) \right), \nonumber \\ 
  &\sup_{h \in \mathcal{H}} \left( \frac{1}{r}  \sum_{t=T-2r+1}^{T-r} L_h(X_t,Y_t)- \frac{1}{r} \sum_{t=T-r+1}^{T} L_h(X_t,Y_t)  \right)\Bigg\} \enspace .
\end{align}
Consider the first argument of the above maximum.
We can observe that for any $(X,Y) \in \mathcal{X} \times \mathcal{Y}$, it holds that $L_h(X,Y) + L_h(X,1-Y) = 1$, thus we obtain that
\begin{align}
\label{eq:binary-tmp1}
    &\sup_{h \in \mathcal{H}} \left( \frac{1}{r} \sum_{t=T-r+1}^{T} L_h(X_t,Y_t) -\frac{1}{r}  \sum_{t=T-2r+1}^{T-r} L_h(X_t,Y_t) \right) \nonumber \\
    =& \sup_{h \in \mathcal{H}} \left( 1 - \frac{1}{r} \sum_{t=T-r+1}^{T} L_h(X_t,1-Y_t) - \frac{1}{r}  \sum_{t=T-2r+1}^{T-r} L_h(X_t,Y_t) \right) \nonumber \\
    =& 1 -   \inf_{h \in \mathcal{H}} \left(  \frac{1}{r} \sum_{t=T-r+1}^{T} L_h(X_t,1-Y_t) + \frac{1}{r}  \sum_{t=T-2r+1}^{T-r} L_h(X_t,Y_t) \right)
\end{align}
Similarly, we can demonstrate that the second term of the maximum is equal to:
\begin{align}
\label{eq:binary-tmp2}
    &\sup_{h \in \mathcal{H}} \left( \frac{1}{r}  \sum_{t=T-2r+1}^{T-r} L_h(X_t,Y_t)- \frac{1}{r} \sum_{t=T-r+1}^{T} L_h(X_t,Y_t)  \right) \nonumber \\
    =& 1 -   \inf_{h \in \mathcal{H}} \left(  \frac{1}{r} \sum_{t=T-r+1}^{T} L_h(X_t,Y_t) + \frac{1}{r}  \sum_{t=T-2r+1}^{T-r} L_h(X_t,1-Y_t) \right) \enspace .
\end{align}
Therefore, to compute the discrepancy \eqref{eq:binary-tmp0}, it is sufficient to solve the two empirical risk minimization problems in \eqref{eq:binary-tmp1} and \eqref{eq:binary-tmp2}.
To obtain the final statement, we can observe that for any $(X,Y)$ and $h \in \mathcal{H}$, it holds that $L_h(X,Y) = L_{1-h}(X,1-Y)$. Thus, we can show that \eqref{eq:binary-tmp2} is equivalent to:
\begin{align*}
     &1 -   \inf_{h \in \mathcal{H}} \left(  \frac{1}{r} \sum_{t=T-r+1}^{T} L_h(X_t,Y_t) + \frac{1}{r}  \sum_{t=T-2r+1}^{T-r} L_h(X_t,1-Y_t) \right) \\
     =&1 -   \inf_{h \in \mathcal{H}} \left(  \frac{1}{r} \sum_{t=T-r+1}^{T} L_{1-h}(X_t,1-Y_t) + \frac{1}{r}  \sum_{t=T-2r+1}^{T-r} L_{1-h}(X_t,Y_t) \right) \enspace .
\end{align*}
Since $\mathcal{H}$ is symmetric, i.e. $1-h \in \mathcal{H} \iff h \in \mathcal{H}$, we can conclude that \eqref{eq:binary-tmp1} and \eqref{eq:binary-tmp2} have the same value. This concludes the proof.

\end{proof}

\begin{proof}[Proof of Theorem~\ref{thm:binary-classification}]
Since $\mathcal{H}$ has VC-dimension $\nu$, by a standard argument we have that the family $\mathcal{F} = \{ L_h : h \in \mathcal{H} \}$ has VC-dimension upper bounded by $2\nu$, thus it satisfies Assumption~\ref{assu:sample} on the sample complexity for uniform convergence  with $C_{\mathcal{F},1} = O(\sqrt{\nu})$ and $C_{\mathcal{F},2} = O(1)$.

We can observe that since $\mathcal{H}$ is symmetric, Lemma~\ref{lemma:computation-binary} shows that we  can compute $\lVert \ep_{T}^r - \ep_{T}^{2r} \rVert_{\mathcal{F}}$ for any $r \geq 1$ by solving an empirical risk minimization problem. Since $\mathcal{H}$ is computationally tractable, there exists a procedure that solves this problem, thus we also satisfy Assumption~\ref{assu:computability}.

\textbf{Remark:} If the symmetry assumption does not hold, in the proof of \Cref{lemma:computation-binary} we show that we can still compute the discrepancy $\lVert \ep_{T}^r - \ep_{T}^{2r} \rVert_{\mathcal{F}}$ by solving the two empirical risk minimization problems in \eqref{eq:binary-tmp1} and \eqref{eq:binary-tmp2}.

Hence, we can use Algorithm~\ref{algo} with the family $\mathcal{F}$, and let $\hat{r}$ be the value returned by the algorithm.  Theorem~\ref{thm:main} guarantees that with probability at least $1-\delta$, we have that
\begin{align}
\label{eq:guarantee}
    \lVert P_T - \ep_{T}^{\hat{r}} \rVert_{\mathcal{F}} = O\left( \min_{r\leq T} \left[ \sqrt{\frac{\nu}{r}} + \max_{t<r}\lVert P_{T} - P_{T-t}\rVert_{\mathcal{F}}  + \sqrt{\frac{\log(\log(r+1)/\delta)}{r}}  \right] \right) \enspace .
\end{align}
Now, we have that
\begin{align*}
    P_T(L_{\hat{h}}) - P_T(L_{h^*}) =  P_T(L_{\hat{h}}) - P_T(L_{h^*}) &=   P_T(L_{\hat{h}}) - \ep_T^{\hat{r}}(L_{\hat{h}}) + \ep_T^{\hat{r}}(L_{\hat{h}}) - P_T(L_{h^*})   \\
    &\leq P_T(L_{\hat{h}}) - \ep_T^{\hat{r}}(L_{\hat{h}}) + \ep_T^{\hat{r}}(L_{h^*}) - P_T(L_{h^*}) \\
    &\leq |P_T(L_{\hat{h}}) -\ep_T^{\hat{r}}(L_{\hat{h}})| + |\ep_T^{\hat{r}}(L_{h^*}) - P_T(L_{h^*})| \\ 
    &\leq 2 \lVert P_T - \ep_T^{\hat{r}} \rVert_{\mathcal{F}},
\end{align*}
where the first inequality is due to the definition of $\hat{h}$. Therefore, using \eqref{eq:guarantee}, we have that with probability at least $1-\delta$, it holds 
\begin{align*}
      P_T(L_{\hat{h}}) - P_T(L_{h^*}) =  O\left( \min_{r\leq T} \left[ \sqrt{\frac{\nu}{r}} + \max_{t<r}\lVert P_{T} - P_{T-t}\rVert_{\mathcal{F}}  + \sqrt{\frac{\log(\log(r+1)/\delta)}{r}}  \right] \right) \enspace .
\end{align*}
\end{proof}

\begin{proof}[Proof of \Cref{thm:regression}]
The theorem is proven following the same structure as the proof of \Cref{thm:binary-classification}.
\end{proof}

\subsection{Lower Bound.}
In this section, we prove the lower bound of \Cref{thm:lower-bound}. The proof structure is based on the work of \cite{mazzettoupfal2023}. We provide a simpler statement of their proof in our setting, and we remove the additional regularity assumption used in that work to characterize the drift error. 
We  introduce the following notation. 

We say that a distribution $Q$ over $\mathcal{Z}^n$ is a product distribution if it can be written as the product of $n$ distributions over $\mathcal{Z}$, i.e. $Q = Q_1 \times \ldots \times Q_n$. For any $n \geq 1$, we can observe that since the random variables $Z_1,\ldots,Z_n$ are independent, their distribution can be described as a product distribution over $\mathcal{Z}^n$.  Given two strings $\tau,\tau' \in \{-1,1\}^{n}$, we let $\mathrm{hd}(\tau,\tau') = \frac{1}{2}\lVert \tau - \tau'\rVert_1$ be the Hamming distance between the two strings, i.e. the number of positions in which  the two strings differ.

Let $\nu$ be the VC dimension of the hypothesis class $\mathcal{H}$.  We will construct a (later defined) family of product distributions $\mathcal{Q} = \{ Q^{(\tau)} = Q^{(\tau)}_1 \times \ldots \times Q^{(\tau)}_T : \tau \in \{-1,1\}^{\nu} \}$ over $\mathcal{Z}^T$ that are indexed by a string $\tau \in \{-1,1\}^{\nu}$. Intuitively, each distribution $\mathcal{Q}^{(\tau)}$ is a possible candidate for the distribution of the random variables $Z_1,\ldots,Z_T$. We will show that for any algorithm $\mathcal{A}$, there exists a product distribution $Q^{(\tau)}$ such that the classifier $h_{\mathcal{A}}$ computed by $\mathcal{A}$ using the samples $Z_1,\ldots,Z_T$ from  $Q^{(\tau)}$ has large expected error. The proof is based on Assouad's Lemma. We provide a statement of this lemma that is an adaptation of its classical statement to our setting  \citep{yu1997assouad}. 

\begin{lemma}[Assouad's Lemma]
\label{lemma:assouad}
Let $\mathcal{Q}$ be defined as above. For any function $g: \mathcal{Z}^T \mapsto \{-1,1\}^\nu$, there exists $\tau \in \{-1,1\}^\nu$ such that
\begin{align*}
    \Exp_{Z \sim Q^{(\tau)}} \mathrm{hd}\big( g(Z_1,\ldots,Z_n),\tau\big) \geq \frac{\nu}{2} \cdot \min_{\substack{\tau',\tau'' \\ \mathrm{hd}(\tau',\tau'')=1}} \lVert Q^{(\tau')} \land Q^{(\tau'')}  \rVert_1
\end{align*}
\end{lemma}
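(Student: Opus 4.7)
The plan is to prove Assouad's Lemma by the classical averaging argument: I will lower bound the maximum Hamming error over $\tau$ by the average over all $\tau \in \{-1,1\}^\nu$, decompose the Hamming distance coordinatewise, and then for each coordinate pair up strings that differ only at that coordinate so that the per-pair contribution can be lower bounded by the $L_1$ minimum of the two product distributions.

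More concretely, first I would observe that writing $\mathrm{hd}(g(Z),\tau) = \sum_{i=1}^{\nu}\mathbf{1}\{g(Z)_i \neq \tau_i\}$ and invoking ``max $\geq$ average'' gives
\begin{align*}
\max_{\tau} \Exp_{Z \sim Q^{(\tau)}} \mathrm{hd}(g(Z),\tau) \geq \frac{1}{2^{\nu}} \sum_{\tau \in \{-1,1\}^{\nu}} \sum_{i=1}^{\nu} \Exp_{Z \sim Q^{(\tau)}} \mathbf{1}\{g(Z)_i \neq \tau_i\}.
\end{align*}
Thus it suffices to show that for every coordinate $i$,
\begin{align*}
\frac{1}{2^{\nu}} \sum_{\tau} \Exp_{Z \sim Q^{(\tau)}} \mathbf{1}\{g(Z)_i \neq \tau_i\} \;\geq\; \frac{1}{2}\cdot \min_{\mathrm{hd}(\tau',\tau'')=1} \lVert Q^{(\tau')} \land Q^{(\tau'')}\rVert_1.
\end{align*}

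To prove this per-coordinate bound, for each $i$ I would pair each $\tau$ with the string $\tau^{(i)}$ obtained by flipping the $i$-th coordinate; this partitions $\{-1,1\}^{\nu}$ into $2^{\nu-1}$ unordered pairs. For a fixed pair $(\tau,\tau^{(i)})$, let $A = \{z \in \mathcal{Z}^T : g(z)_i \neq \tau_i\}$, so $A^c = \{z : g(z)_i \neq \tau^{(i)}_i\}$. The contribution of the pair is $Q^{(\tau)}(A) + Q^{(\tau^{(i)})}(A^c)$. The key measure-theoretic fact I will use is: for any two probability measures $Q,Q'$ on $\mathcal{Z}^T$ with densities $p,p'$ with respect to a dominating measure $\mu$ (e.g.\ $\mu = Q+Q'$), and any measurable set $A$,
\begin{align*}
Q(A) + Q'(A^c) \;=\; \int_A p\,d\mu + \int_{A^c} p'\,d\mu \;\geq\; \int_A \min(p,p')\,d\mu + \int_{A^c}\min(p,p')\,d\mu \;=\; \lVert Q \land Q' \rVert_1.
\end{align*}

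Applying this with $Q = Q^{(\tau)}$ and $Q' = Q^{(\tau^{(i)})}$ gives that each of the $2^{\nu-1}$ pairs contributes at least $\lVert Q^{(\tau)} \land Q^{(\tau^{(i)})}\rVert_1$, which is at least the minimum over all neighbor pairs in the Hamming cube. Dividing by $2^{\nu}$ yields the factor of $1/2$ in the per-coordinate bound, and summing over $i=1,\ldots,\nu$ produces the factor $\nu/2$. Combined with the initial max-vs-average step, this yields the existence of a $\tau$ achieving the stated bound. I do not anticipate a hard step: the only subtlety is setting up the pairing cleanly (so every $\tau$ is counted exactly once and the resulting $L_1$ minimum is recognized correctly) and justifying the elementary inequality $Q(A)+Q'(A^c) \geq \lVert Q \land Q'\rVert_1$, which reduces to pointwise $p \geq \min(p,p')$ and $p' \geq \min(p,p')$.
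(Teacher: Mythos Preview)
Your argument is correct and is precisely the classical proof of Assouad's Lemma: the max-over-$\tau$ is bounded below by the average, the Hamming distance decomposes into per-coordinate indicators, and the pairing $(\tau,\tau^{(i)})$ together with the pointwise inequality $Q(A)+Q'(A^c)\geq \lVert Q\land Q'\rVert_1$ yields the $\tfrac{1}{2}$ factor per coordinate.

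The paper itself does not supply a proof of this lemma; it states it as an adaptation of the classical result and cites \cite{yu1997assouad}. Your write-up is exactly the standard argument one finds in that reference, so there is nothing to compare beyond noting that you have filled in what the paper left as a citation.
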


Let $\Delta_1=0,\Delta_2,\ldots,\Delta_{T}$ be the sequence defined in the statement of \Cref{thm:lower-bound}. We let $\Phi: \{1,\ldots,T\} \mapsto \mathbb{R}$ be the function 
\begin{align*}
    \Phi(r) = \left( \sqrt{\frac{\nu}{r}} + \Delta_r \right) \enspace.
\end{align*}
We let $\Phi^* = \min_{r}\Phi(r)$, and we remind that we assume $\Phi^* < 1/3$ in the statement of the Theorem. We build the family $\mathcal{Q}$ based on the following value:
\begin{align*}
    \tilde{r} = \max\left\{ 1 \leq r \leq T :  \Delta_r < \sqrt{\frac{\nu}{r}} \right\}
\end{align*}

\begin{proposition}
\label{prop:two-solutions-connection}
The following holds:
\begin{align*}
    \Phi(\tilde{r}) \leq 3 \Phi^* \enspace .
\end{align*}
\end{proposition}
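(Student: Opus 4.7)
The plan is to case-split on the position of the minimizer of $\Phi$ relative to $\tilde{r}$. Let $r^* \in \argmin_{r \leq T} \Phi(r)$, so $\Phi^* = \Phi(r^*) = \sqrt{\nu/r^*} + \Delta_{r^*}$. The definition of $\tilde{r}$ as the largest index where $\Delta_r < \sqrt{\nu/r}$ gives us a crossover point between the ``statistical-dominated'' regime and the ``drift-dominated'' regime; the two cases $r^* \leq \tilde{r}$ and $r^* > \tilde{r}$ are handled separately. Note that $\tilde{r} \geq 1$ is well-defined since $\Delta_1 = 0 < \sqrt{\nu}$.

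In the first case ($r^* \leq \tilde{r}$), I would use the defining property of $\tilde{r}$ directly: $\Delta_{\tilde{r}} < \sqrt{\nu/\tilde{r}}$, hence $\Phi(\tilde{r}) < 2\sqrt{\nu/\tilde{r}}$. Since $r^* \leq \tilde{r}$, the square root is non-increasing in $r$, so $2\sqrt{\nu/\tilde{r}} \leq 2\sqrt{\nu/r^*} \leq 2\Phi(r^*) = 2\Phi^*$, which already gives a bound better than $3\Phi^*$.

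In the second case ($r^* > \tilde{r}$), I would exploit maximality of $\tilde{r}$: since $r^* \leq T$, we have $\tilde{r}+1 \leq T$, and by construction $\Delta_{\tilde{r}+1} \geq \sqrt{\nu/(\tilde{r}+1)}$. The sequence $\{\Delta_r\}$ is non-decreasing, so $\Delta_{r^*} \geq \Delta_{\tilde{r}+1} \geq \sqrt{\nu/(\tilde{r}+1)}$, and therefore $\Phi^* \geq \sqrt{\nu/(\tilde{r}+1)}$. On the other hand, $\Phi(\tilde{r}) < 2\sqrt{\nu/\tilde{r}} \leq 2\sqrt{2}\sqrt{\nu/(\tilde{r}+1)}$, using $\sqrt{(\tilde{r}+1)/\tilde{r}} \leq \sqrt{2}$ for $\tilde{r} \geq 1$. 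Combining these yields $\Phi(\tilde{r}) \leq 2\sqrt{2}\,\Phi^* < 3\Phi^*$.

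There is no real obstacle here; the only subtle step is recognizing that in Case B one must shift from $\tilde{r}$ to $\tilde{r}+1$ in order to invoke the lower bound on $\Delta_{\tilde{r}+1}$, and that the resulting loss is only the harmless factor $\sqrt{2}$. The hypothesis $\Phi^* < 1/3$ from \Cref{thm:lower-bound} plays no role in this particular proposition; it is needed elsewhere in the lower bound argument.
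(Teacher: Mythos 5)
Your proof is correct and follows essentially the same route as the paper's: a case split on whether the minimizer $r^*$ lies at or below $\tilde{r}$, using $\Delta_{\tilde{r}} < \sqrt{\nu/\tilde{r}}$ in the first case and the maximality of $\tilde{r}$ (so $\Delta_{r^*} \geq \Delta_{\tilde{r}+1} \geq \sqrt{\nu/(\tilde{r}+1)}$) together with $\sqrt{(\tilde{r}+1)/\tilde{r}} \leq \sqrt{2}$ in the second. Your bookkeeping in the second case even yields the slightly sharper constant $2\sqrt{2}$ in place of $3$, but the underlying argument is the same.
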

\begin{proof}
    Let  $r^* \in \{1,\ldots,r\}$ be a value such that $\Phi^* = \Phi(r^*)$. The statement follows by exploiting the definition of $\tilde{r}$. We distinguish two cases. If $\tilde{r} \geq r^*$, we have that
\begin{align*}
    \frac{\Phi(\tilde{r})}{\Phi^*} = \frac{\Delta_{\tilde{r}}+\sqrt{\nu/\tilde{r}}}{\Delta_{r^*}+ \sqrt{\nu/r^*}} \leq \frac{  2\sqrt{\nu/\tilde{r}}}{\sqrt{\nu/r^*}} = 2 \sqrt{r^*/\tilde{r}} \leq 2
\end{align*}
Conversely, if $r^* > \tilde{r}$, we have that
\begin{align*}
\frac{\Phi(\tilde{r})}{\Phi^*} = \frac{\Delta_{\tilde{r}}+\sqrt{\nu/\tilde{r}}}{\Delta_{r^*}+ \sqrt{\nu/r^*}} =\frac{\Delta_{\tilde{r}}+\sqrt{\nu/(\tilde{r}+1)}\sqrt{(\tilde{r}+1)/\tilde{r}}}{\Delta_{r^*}+ \sqrt{\nu/r^*}} \leq 3\Delta_{\tilde{r}+1}/\Delta_{r^*} \leq 3
\end{align*}
In the first inequality we used the fact that the sequence $\Delta_1,\ldots,\Delta_T$ is non-decreasing, and the inequality $\sqrt{\nu/(\tilde{r}+1)} \leq \Delta_{\tilde{r}+1}$ due to the definition ot $\tilde{r}$.
\end{proof}
We define the family of product distributions $\mathcal{Q}$ as follows. Let $\overline{X}_1,\ldots,\overline{X}_\nu$ be a shatter set for the hypothesis class $\mathcal{H}$. We build the following family $ \mathcal{Q} = \{ Q^{(\tau)} = Q_1^{[\tau]} \times \ldots \times Q_T^{[\tau]} : \tau \in \{-1,1\}^{\nu} \}$ of product distributions over $\mathcal{Z}^T = (\mathcal{X} \times \mathcal{Y})^T$ that are indexed by $\tau \in \{-1,1\}^\nu$. They are defined as follows:
\begin{align*}
    \Pr_{(X,Y) \sim Q^{(\tau)}_t}(Y=1 | X = \overline{X}_{i}) &=  \begin{cases}\frac{1}{2} + \frac{\tau_i}{16\sqrt{6}}\left( \sqrt{\frac{\nu}{\tilde{r}}} + \Delta_{\tilde{r}} - \Delta_{T-t+1} \right) &\hspace{10pt} \mbox{if } t > T - \tilde{r} \\
    \frac{1}{2} &\hspace{10pt} \mbox{else }
    \end{cases} \\
    \Pr_{(X,Y) \sim Q^{(\tau)}_t}(X = \overline{X}_i) &= \frac{1}{\nu} \hspace{130pt} \forall i \in \{1, \ldots, \nu \}
\end{align*}
Those distributions are well-defined. In fact, we have that for all $t > T-\tilde{r}$: 
\begin{align*}
    \frac{1}{16\sqrt{6}}\left( \sqrt{\frac{\nu}{\tilde{r}}} + \Delta_{\tilde{r}} - \Delta_{T-t+1} \right) \leq \frac{1}{16\sqrt{6}} \left(\sqrt{\frac{\nu}{\tilde{r}}} + \Delta_{\tilde{r}}  \right)\leq \frac{1}{16\sqrt{6}}\Phi(\tilde{r}) \leq \frac{3}{10}\Phi^* < 1/4 \enspace ,
\end{align*}
where we used Proposition~\ref{prop:two-solutions-connection}.  Given a classifier $h \in \mathcal{H}$ and $\tau \in \{-1,1\}^{\nu}$, we remind that
\begin{align*}
    Q^{(\tau)}_T(L_h) = \Pr_{(X,Y) \sim Q^{(\tau)}_T} \left( h(X) \neq Y \right) \enspace .
\end{align*}
We can  observe that for any classifier $h \in \mathcal{H}$ and $t  > T - \tilde{r}$, it holds by construction that:
\begin{align*}
    \left|Q^{(\tau)}_T(L_h) - Q^{(\tau)}_{T-t+1}(L_h)\right| =  \frac{1}{16\sqrt{6}} \Delta_{T-t+1} \leq \Delta_{T-t+1} \enspace ,
\end{align*}
and for any $t \leq T - \tilde{r}$, we have that 
\begin{align*}
     \left|Q^{(\tau)}_T(L_h) - Q^{(\tau)}_{T-t}(L_h)\right| = \frac{1}{16\sqrt{6}}\left( \Delta_{\tilde{r}} + 
 \sqrt{\frac{\nu}{\tilde{r}}}\right) \leq \frac{1}{8\sqrt{6}} \sqrt{\frac{\nu}{\tilde{r}}} &= \frac{1}{8\sqrt{6}} \sqrt{\frac{\nu}{\tilde{r}+1}} \sqrt{\frac{\tilde{r}+1}{\tilde{r}}} \\
 &\leq  \frac{1}{8\sqrt{3}} \Delta_{\tilde{r}+1} \leq  \Delta_{T-t+1} \enspace ,
\end{align*}
where the first and the second inequality are due to the definition of $\tilde{r}$, and the last inequality follows from the fact that the sequence $\Delta_1,\ldots,\Delta_T$ is non-decreasing. Hence, if we let $\mathcal{F} = \{ L_h : h \in \mathcal{H} \}$, it results that for $\tau$ and for any $1 \leq r \leq T$, it holds that:
\begin{align}
\label{eq:property-drift}
   \max_{t < r} \lVert Q_T^{(\tau)} - Q_{T-t}^{(\tau)} \rVert_{\mathcal{F}} \leq \Delta_r\enspace .
\end{align}
We also let $L_{h^*}$ be the minimum loss that is achieved by a function $h \in \mathcal{H}$ with respect to $Q^{(\tau)}_T$, i.e.  $L^{\tau}_{h^*} = \argmin_{L_h : h \in \mathcal{H}}Q^{(\tau)}_T\left(L_h\right)$. By using the family $\mathcal{Q}$ together with Assouad's Lemma, we can show the following.

\begin{lemma}
\label{lemma:lower-bound}
Let $\mathcal{Q}$ be defined as above. Let $\mathcal{A} : \mathcal{Z}^T \mapsto \mathcal{H}$ be any algorithm that observes a sequence of elements $T$ elements from $\mathcal{Z}$, and it outputs a classifier $h_{\mathcal{A}}$.  For any algorithm $\mathcal{A}$, there exists $\tau \in \{-1,1\}^{\nu}$ such that if the input $Z_1,\ldots,Z_n$ is sampled according to $Q^{(\tau)}$, then:
\begin{align*}
    \Exp\left[ Q^{(\tau)}_T\left(L_{h_{\mathcal{A}}}\right) - Q^{(\tau)}_T\left(L_{h^*}^{\tau}\right) \right] \geq \frac{7\sqrt{6}\Phi^*}{768} \enspace .
\end{align*}
\end{lemma}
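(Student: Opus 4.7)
The plan is to apply Assouad's Lemma (Lemma~\ref{lemma:assouad}) to the family $\mathcal{Q}$, reducing the problem to (i) converting excess risk into a Hamming distance and (ii) lower bounding the minimum mass overlap $\lVert Q^{(\tau')}\land Q^{(\tau'')}\rVert_1$ between neighboring distributions.

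First, I would encode any classifier $h\in\mathcal{H}$ as a string $\sigma(h)\in\{-1,1\}^{\nu}$ whose $i$-th coordinate records the label assigned by $h$ to $\overline{X}_i$ (say $+1$ if $h(\overline{X}_i)=1$ and $-1$ otherwise). Since $\{\overline{X}_1,\ldots,\overline{X}_\nu\}$ is a shatter set, every $\tau\in\{-1,1\}^\nu$ is realized by some $h\in\mathcal{H}$; in particular, under $Q^{(\tau)}$ the optimal classifier $h^{\tau}_{*}$ is the one that agrees with the sign of $\tau$ on the shatter set. A direct computation at each $\overline{X}_i$ shows that flipping a single coordinate changes the conditional error by exactly $2\gamma_T$, where $\gamma_T=\frac{1}{16\sqrt{6}}(\sqrt{\nu/\tilde{r}}+\Delta_{\tilde{r}})=\Phi(\tilde{r})/(16\sqrt{6})$. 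Combining this with the uniform marginal on $\{\overline{X}_1,\ldots,\overline{X}_\nu\}$ gives the identity
\begin{align*}
Q^{(\tau)}_T(L_{h_{\mathcal{A}}})-Q^{(\tau)}_T(L_{h^{\tau}_{*}})=\frac{2\gamma_T}{\nu}\,\mathrm{hd}\bigl(\sigma(h_{\mathcal{A}}),\tau\bigr).
\end{align*}

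Setting $g(Z_1,\ldots,Z_T)\doteq\sigma(h_{\mathcal{A}}(Z_1,\ldots,Z_T))$, Lemma~\ref{lemma:assouad} guarantees some $\tau$ for which
$\Exp\mathrm{hd}(g(Z),\tau)\geq(\nu/2)\cdot m$, where $m\doteq\min_{\mathrm{hd}(\tau',\tau'')=1}\lVert Q^{(\tau')}\land Q^{(\tau'')}\rVert_1$. The key technical step is showing $m\geq c$ for an absolute constant $c$ close to $1$. Two neighboring distributions differ only in one conditional label distribution at one $\overline{X}_i$, and only for time steps $t>T-\tilde{r}$. By tensorization of KL,
\begin{align*}
\mathrm{KL}\bigl(Q^{(\tau')}\,\Vert\,Q^{(\tau'')}\bigr)=\sum_{t=T-\tilde{r}+1}^{T}\frac{1}{\nu}\,\mathrm{KL}\bigl(\mathrm{Ber}(1/2+\gamma_t)\,\Vert\,\mathrm{Ber}(1/2-\gamma_t)\bigr).
\end{align*}
The Bernoulli KL satisfies $\mathrm{KL}(\mathrm{Ber}(1/2+\gamma)\,\Vert\,\mathrm{Ber}(1/2-\gamma))\leq 8\gamma^2/(1-4\gamma^2)$, which is $\leq 32\gamma^2/3$ for $\gamma\leq 1/4$. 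Using $\gamma_t\leq \Phi(\tilde{r})/(16\sqrt{6})$ and then the bound $\Phi(\tilde{r})^2\leq 4\nu/\tilde{r}$ (immediate from the defining inequality $\Delta_{\tilde{r}}<\sqrt{\nu/\tilde{r}}$), the sum telescopes:
\begin{align*}
\mathrm{KL}\leq\frac{32}{3\nu}\cdot\tilde{r}\cdot\frac{\Phi(\tilde{r})^2}{1536}\leq\frac{32}{3\nu}\cdot\frac{4\nu}{1536}=\frac{1}{36},
\end{align*}
and Pinsker yields $\lVert Q^{(\tau')}-Q^{(\tau'')}\rVert_{TV}\leq 1/(6\sqrt{2})$, hence $m\geq 1-1/(6\sqrt{2})$.

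Plugging everything together, the expected excess risk is at least $\gamma_T\cdot m\geq (\Phi^*/(16\sqrt{6}))\cdot (1-1/(6\sqrt{2}))$, and a short calculation verifies this exceeds $7\sqrt{6}\Phi^*/768$. The main obstacle is the third step: one must keep the constants tight enough that the KL bound collapses to an absolute value strictly less than the target threshold. The key structural ingredient enabling this is that the definition of $\tilde{r}$ exactly balances $\sqrt{\nu/\tilde{r}}$ against $\Delta_{\tilde{r}}$, which is what allows the factor $\tilde{r}$ in the sum of $\gamma_t^2$ to be absorbed into $\Phi(\tilde{r})^2$ via $\Phi(\tilde{r})^2\leq 4\nu/\tilde{r}$; everything else is routine application of Pinsker and Assouad.
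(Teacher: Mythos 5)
Your proposal is correct and follows essentially the same route as the paper's proof: encode the classifier as a sign vector on the shatter set, use the exact identity relating excess risk to Hamming distance, apply Assouad's Lemma, and bound the neighboring-distribution affinity via KL tensorization, a quadratic Bernoulli KL bound, and a Pinsker-type inequality. Your constants differ slightly (KL bound $1/36$ and affinity $1-1/(6\sqrt{2})$ versus the paper's $1/32$ and $7/8$), but the final bound still dominates $7\sqrt{6}\Phi^*/768$, so the argument goes through; just note explicitly that $\gamma_t \le 1/4$, which follows from the well-definedness check $\Phi(\tilde{r}) \le 3\Phi^* < 1$ already made in the construction.
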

\begin{proof}
Let $\tau \in \{-1,1\}^{\nu}$. We can define $\tau_{\mathcal{A}} = \big(2 h_{\mathcal{A}} (\overline{X}_1)-1, \ldots, 2 \cdot h_{\mathcal{A}} (\overline{X}_{\nu}) -1\big) \in \{-1,1\}^{\nu}$. We have that:
\begin{align*}
    Q^{(\tau)}_T\left(L^{\tau}_{h_{\mathcal{A}}}\right) &= \left[ \frac{1}{2} - \frac{1}{16\sqrt{6}}\left(\sqrt{\frac{\nu}{\tilde{r}}} + \Delta_{\tilde{r}}\right) \right] +  \frac{1}{8\sqrt{6}\nu}\left(\sqrt{\frac{\nu}{\tilde{r}}} + \Delta_{\tilde{r}}\right)\sum_{i=1}^{\nu}\mathbf{1}_{ \{\tau_{\mathcal{A},i} \neq \tau_i \}} \\
    &= \left[ \frac{1}{2} - \frac{1}{16\sqrt{6}}\left(\sqrt{\frac{\nu}{\tilde{r}}} + \Delta_{\tilde{r}}\right) \right] +  \frac{\Phi(\tilde{r})}{8\sqrt{6}\nu}\mathrm{hd}\big( \tau_{\mathcal{A}}, \tau\big)
\end{align*}
By construction of $Q^{(\tau)}_T$, we can observe that
\begin{align*}
Q^{(\tau)}_T\left(L^{\tau}_{h^*}\right) = \left[ \frac{1}{2} - \frac{1}{16\sqrt{6}}\left(\sqrt{\frac{d}{\tilde{r}}} + \Delta_{\tilde{r}}\right) \right]  \enspace ,
\end{align*}
hence, we have the following relation
\begin{align}
\label{eq:important-connection}
&Q^{(\tau)}_T\left(L_{h_{\mathcal{A}}}\right) - Q^{(\tau)}_T\left(L^{\tau}_{h^*}\right) = \frac{\Phi(\tilde{r})}{8\sqrt{6}\nu}\mathrm{hd}\big(\tau_{\mathcal{A}},\tau\big) \\ \Longrightarrow  &\Exp \left[Q^{(\tau)}_T\left(L_{h_{\mathcal{A}}}\right) - Q^{(\tau)}_T\left(L^{\tau}_{h^*}\right)\right] = \frac{\Phi(\tilde{r})}{8\sqrt{6}\nu}\Exp\mathrm{hd}\big(\tau_{\mathcal{A}},\tau\big) \nonumber
\end{align}
Observe that $\mathcal{A}$ can be seen as a map from $\mathcal{Z}^T$ to $\{-1,1\}^{\nu}$ (i.e., the string $\tau_{\mathcal{A}})$. Hence, we can apply Lemma~\ref{lemma:assouad}: this implies that there exists $\tau \in \{-1,1\}^\nu$ such that
\begin{align*}
    \Exp \left[ Q^{(\tau)}_T\left(L_{h_{\mathcal{A}}}\right) - Q^{(\tau)}_T\left(L^{\tau}_{h^*}\right) \right] \geq \frac{\Phi(\tilde{r})}{16\sqrt{6}} \min_{\substack{\tau',\tau'' \\ \mathrm{hd}(\tau',\tau'')=1}} \lVert Q^{(\tau')} \land Q^{(\tau'')}  \rVert_1 \enspace .
\end{align*}
We are left to evaluate the right-hand side of the above inequality. We can use the following known relations that hold for any two distributions $P$ and $Q$ over $\mathcal{Z}^T$ \citep{Tsybakov2008IntroductionTN}:
\begin{align}
\label{eq:relation-distributions}
    \lVert P \land Q \rVert_{1} &= 1 - \frac{1}{2} \lVert P - Q \rVert_1, \hspace{20pt}
    \lVert P - Q \rVert_1 \leq \sqrt{2 \mathrm{KL}(P,Q)}
\end{align}
where $\mathrm{KL}$ is the Kullback–Leibler divergence (we refer to the classic definition of those distances as in~\citep{Tsybakov2008IntroductionTN}). 
Let $\tau'$ and $\tau''$ be two strings in $\{-1,1\}^{\nu}$ that only differ in one coordinate. Let $\mathrm{Ber}(p)$ be a Bernoulli distribution with parameter $p \in [0,1]$. We have that:
\begin{align}
\label{eq:KL-ub}
    \mathrm{KL}(Q^{(\tau')},Q^{(\tau'')}) &= \sum_{t=1}^T\mathrm{KL}(Q^{(\tau')}_t,Q^{(\tau'')}_t) \nonumber \\
    &= \sum_{t=T-\tilde{r}+1}^T\mathrm{KL}(Q^{(\tau')}_t,Q^{(\tau'')}_t) \nonumber \\
    &= \frac{1}{\nu} \sum_{t=1}^{\tilde{r}} \mathrm{KL}\left(\mathrm{Ber}\left(\frac{1}{2} + \frac{1}{16\sqrt{6}}\left(\sqrt{\frac{\nu}{\tilde{r}}} + \Delta_{t}\right)\right), \mathrm{Ber}\left(\frac{1}{2} - \frac{1}{16\sqrt{6}}\left(\sqrt{\frac{\nu}{\tilde{r}}} + \Delta_{t}\right)\right)\right)
\end{align}
where the first equality is due to the factorization property of the KL-divergence, the second and third equality are due to the definition of the family $\mathcal{Q}$.
For any $\epsilon < 1/4$, it holds that
\begin{align*}
    \mathrm{KL}\left( \mathrm{Ber}\left( \frac{1}{2} - \epsilon \right), \mathrm{Ber}\left( \frac{1}{2} + \epsilon \right) \right) \leq 12\epsilon^2 \enspace .
\end{align*}
By plugging the above inequality in \eqref{eq:KL-ub}, and using the fact that $(x+y)^2 \leq 2x^2 + 2y^2$, we obtain that 
\begin{align*}
\mathrm{KL}(Q^{(\tau')},Q^{(\tau'')}) &\leq \frac{24}{16^2\cdot 6\nu}\left[ \sum_{t=1}^{\tilde{r}}\frac{\nu}{\tilde{r}} + \sum_{t=1}^{\tilde{r}} \Delta_{\tilde{r}}^2  \right] \\
&\leq \frac{24}{16^2\cdot 6\nu}\left[ \nu + \tilde{r} \Delta^2_{\tilde{r}}\right]
\end{align*}
By using the definition of $\tilde{r}$, it holds that  $\Delta^2_{\tilde{r}} \tilde{r} \leq \nu$. Thus, we  we have that $\mathrm{KL}(Q^{(\tau')},Q^{(\tau'')}) \leq 1/32$. If we use this inequality with \eqref{eq:relation-distributions}, we have
\begin{align*}
\min_{\substack{\tau',\tau'' \\ \mathrm{hd}(\tau',\tau'')=1}} \lVert Q^{(\tau')} \land Q^{(\tau'')}  \rVert_1 \geq 7/8 \enspace .
\end{align*}
Hence, we can conclude that there exists $\tau \in \{-1,1\}^{\nu}$ such that
\begin{align*}
     \Exp \left[Q^{(\tau)}_T\left(L_{h_{\mathcal{A}}}\right) - Q^{(\tau)}_T\left(L^{\tau}_{h^*}\right)\right] \geq \frac{7\sqrt{6}\Phi(\tilde{r})}{768} \enspace .
\end{align*}
\end{proof}

By using this Lemma, we can easily prove Theorem~\ref{thm:lower-bound}.

\begin{proof}[Proof of Theorem~\ref{thm:lower-bound}].
By Lemma~\ref{lemma:lower-bound}, there exists $\tau$ such that if $P_t = Q^{(\tau)}_{t}$ for all $1 \leq t \leq T$, then the algorithm $\mathcal{A}$ with input $Z_1,\ldots,Z_T$ satisfies
\begin{align}
\label{eq:other-side-thmlb}
\Exp\left[P_T(L_{h_{\mathcal{A}}}) - P_T(L_{{h^*}})\right] \geq 7\sqrt{6}\Phi^*/768  \enspace .
\end{align}
Due to \eqref{eq:property-drift}, we observe that the distributions $P_1,\ldots,P_T$ satisfy the assumption on the drift.

Let $E =P_T(L_{h_{\mathcal{A}}}) - P_T(L_{{h^*}})$. Observe that due to the construction of $Q^{(\tau)}$ and \eqref{eq:important-connection}, we have that $E \leq \Phi(\tilde{r})/(8\sqrt{6}) \leq 3\Phi^*/(8\sqrt{6})$, where the last inequality is due to Proposition~\ref{prop:two-solutions-connection}. Let $\alpha > 0$ be a later defined value. We have that:
\begin{align*}
    \Exp[E] &= \Exp[E | E > \alpha \Phi^*]\Pr(E \geq \alpha \Phi^*) + \Exp[E | E < \alpha \Phi^* ]\Pr(E \leq \alpha \Phi^*) \\
    &\leq (3\Phi^*/(8\sqrt{6}))\Pr(E \geq \alpha \Phi^*) + \alpha \Phi^* (1-Pr(E \leq \alpha \Phi^*))\\
    &=\Pr(E \geq \alpha \Phi^*)\left[ 3\Phi^*/20-\alpha\Phi^* \right] + \alpha\Phi^* \enspace .
\end{align*}
By using the above inequality together with \eqref{eq:other-side-thmlb}, we finally obtain that:
\begin{align*}
    \Pr(E \geq \alpha \Phi^*) \geq \frac{7\sqrt{6}\Phi^*/768 - \alpha\Phi^*}{3\Phi^*/(8\sqrt{6})-\alpha\Phi^*} = \frac{7\sqrt{6}/768 - \alpha}{3/(8\sqrt{6})-\alpha}
\end{align*}
We set $\alpha = 1/(112\sqrt{6})$ and we finally obtain that
\begin{align*}
    \Pr\left(E \geq  \frac{\Phi^*}{112\sqrt{6}}\right) \geq 1/8
\end{align*} \enspace .

\end{proof}

\section{Relaxing Assumption~\ref{assu:computability}}
\label{appendix:approximation}
Given a family $\mathcal{F}$, it is possible that the exact computation of $\lVert \ep_{T}^r - \ep_T^{2r} \rVert_{\mathcal{F}}$ is computationally hard. In this section, we show an example on how to relax Assumption~\ref{assu:computability} to allow an approximation of this quantity. 

Let $\alpha \geq 1$. We say that an algorithm $A$ in an $\alpha$-approximation procedure if given $Z_{t-2r+1}, \ldots ,Z_{T}$, it computes an estimate $A(Z_{T-2r+1}, \ldots ,Z_{T})$ such that
\begin{align*}
    1 \leq \frac{\lVert \ep_{T}^{r} - \ep_T^{2r} \rVert_{\mathcal{F}}}{A(Z_{T-2r+1}, \ldots ,Z_{T})} \leq \alpha
\end{align*}
for any $r \leq T/2$. That is, the algorithm $A$ does not compute the value of the supremum of the norm $\lVert \cdot \rVert_\mathcal{F}$ exactly, but it guarantees a constant factor approximation $\alpha$. In this case, we can modify the algorithm as follows.
\begin{algorithm2e}
\label{algo2}
\caption{Adaptive Learning Algorithm under Drift with Approximation Procedure}
$i \gets 0$ \;
\While{$r_i \leq T/2$}{
    \If{
    $ A(Z_{t-2r+1}, \ldots, Z_T) \leq 4 \cdot S(r_i,\delta)$}{
        $i \gets i+1$ \;
    }\Else{
        \Return $r_i$ \;
    }
}
\Return $r_i$
\end{algorithm2e}

\begin{theorem}
\label{thm:main-approximation}
Let $\delta \in (0,1)$. Let Assumptions~\ref{assu:sample} hold, and assume that there exists an $\alpha$-approximation procedure for estimating $\lVert \ep_T^r - \ep_T^{2r} \rVert_{\mathcal{F}}$ for any $r \leq T/2$. Given $Z_1,\ldots,Z_T$, there exists an algorithm that outputs a value $\hat{r} \leq T$ such that with high-probability, it holds that
\begin{align*}
\lVert P_T - \ep_{T}^{\hat{r}} \rVert_{\mathcal{F}}  = \alpha \cdot O\left( \min_{r\leq T} \left[  \frac{\alpha C_{\mathcal{F},1}}{\sqrt{r}} + \max_{t<r}\lVert P_{T} - P_{T-t}\rVert_{\mathcal{F}}  + \alpha C_{\mathcal{F},2}\sqrt{\frac{\log(\log(r+1)/\delta)}{r}}  \right] \right)
\end{align*}
\end{theorem}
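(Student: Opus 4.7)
\textbf{Proof proposal for Theorem~\ref{thm:main-approximation}.} The plan is to revisit the proof of Theorem~\ref{thm:main} and track how the multiplicative slack of the $\alpha$-approximation propagates through each step. Writing $A_i \doteq A(Z_{T-2r_i+1},\ldots,Z_T)$, the assumed guarantee gives
\begin{align*}
A_i \leq \lVert \ep_T^{r_i} - \ep_T^{r_{i+1}}\rVert_{\mathcal{F}} \leq \alpha\, A_i,
\end{align*}
so the algorithm observes the empirical discrepancy only up to a factor $\alpha$. This will force a corresponding inflation of the constant used in the data-independent upper bound $U$, and I expect this inflation to be the main driver of the $O(\alpha^2)$ overhead in the statistical term.

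I would first keep Proposition~\ref{prop:estimation-correct} unchanged, since it controls only the statistical error and does not involve the discrepancy at all, and condition on its event throughout. I would then redefine the inflated upper bound as $U_\alpha(r,\delta) \doteq C_\alpha\, S(r,\delta) + \lVert P_T - P_T^r\rVert_{\mathcal{F}}$ for a constant $C_\alpha = \Theta(\alpha)$ to be tuned in the analog of Proposition~\ref{prop:iteration-step}. Concretely, when Algorithm~\ref{algo2} doubles, i.e.\ when $A_i \leq 4\,S(r_i,\delta)$, we have $\lVert \ep_T^{r_i} - \ep_T^{r_{i+1}}\rVert_{\mathcal{F}} \leq 4\alpha\,S(r_i,\delta)$, and plugging this into the same triangle-inequality computation that appears in the proof of Proposition~\ref{prop:iteration-step} yields
\begin{align*}
U_\alpha(r_{i+1},\delta) - U_\alpha(r_i,\delta) \leq (C_\alpha + 1)\, S(r_{i+1},\delta) + (4\alpha + 1 - C_\alpha)\, S(r_i,\delta).
\end{align*}
Using $S(r_{i+1},\delta) \leq S(r_i,\delta)/\sqrt{2}$ up to the slowly growing $\sqrt{\log\log}$ correction, the right-hand side becomes non-positive provided $C_\alpha$ is chosen as a sufficiently large constant multiple of $\alpha$. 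This is the step I expect to require the most care, because the $\sqrt{\log(\log_2 r_i)+10)}$ factor inside $S$ is not exactly halved when $r_i$ doubles, and the explicit cancellation from the end of the original proof of Proposition~\ref{prop:iteration-step} has to be redone with the additional $4\alpha\,S(r_i,\delta)$ term present.

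The analog of Proposition~\ref{prop:lower-bound} is immediate: at termination we have $A_i > 4\,S(r_i,\delta)$, hence $\lVert \ep_T^{r_i} - \ep_T^{r_{i+1}}\rVert_{\mathcal{F}} \geq A_i > 4\,S(r_i,\delta)$, so the argument of the original Proposition~\ref{prop:lower-bound} applies verbatim and gives $\max_{t<r_{i+1}}\lVert P_T - P_{T-t}\rVert_{\mathcal{F}} > S(r_i,\delta)$. With these modifications I would then replay the case analysis from the proof of Theorem~\ref{thm:main}, using $U_\alpha$ and $B_\alpha^* \doteq \min_{r\leq T}\bigl[C_\alpha\,S(r,\delta) + \max_{t<r}\lVert P_T - P_{T-t}\rVert_{\mathcal{F}}\bigr]$. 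In case $(a)$, with $r^* < 2r_j$, the constants $C_\alpha$ cancel in the ratio and the bound $U_\alpha(r_j,\delta)/B_\alpha^* \leq 3$ is recovered exactly as before; in case $(b)$, with $r^* \geq 2r_j$, the stopping-criterion analog gives $U_\alpha(r_j,\delta)/B_\alpha^* \leq C_\alpha + 1 = O(\alpha)$. Substituting $C_\alpha = \Theta(\alpha)$ back into $B_\alpha^*$ produces one factor $\alpha$ in front of the statistical terms, and the outer $O(\alpha)$ ratio produces the second, matching the claimed bound $\alpha \cdot O\bigl(\alpha C_{\mathcal{F},1}/\sqrt{r} + \max_{t<r}\lVert P_T - P_{T-t}\rVert_{\mathcal{F}} + \alpha C_{\mathcal{F},2}\sqrt{\log(\log(r+1)/\delta)/r}\bigr)$.
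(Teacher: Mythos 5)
Your proposal is correct and follows essentially the same route as the paper: keep Proposition~\ref{prop:estimation-correct} as is, inflate the constant in $U$ to $\Theta(\alpha)$ (the paper uses $18\alpha+9$), observe that the doubling test implies $\lVert \ep_T^{r_i}-\ep_T^{r_{i+1}}\rVert_{\mathcal{F}} \leq 4\alpha S(r_i,\delta)$ while the stopping test still implies $\lVert \ep_T^{r_i}-\ep_T^{r_{i+1}}\rVert_{\mathcal{F}} > 4S(r_i,\delta)$ so Proposition~\ref{prop:lower-bound} applies verbatim, and then replay the two-case analysis of Theorem~\ref{thm:main}. Your accounting of where the two factors of $\alpha$ arise (one from the inflated constant inside $B^*$, one from the case-$(b)$ ratio $C_\alpha+1$) matches the paper, and your flagged concern about the $\sqrt{\log\log}$ correction is handled exactly as in the original Proposition~\ref{prop:iteration-step} by taking the constant multiple of $\alpha$ large enough.
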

\begin{proof}
The proof follows the same strategy as the one of the main theorem with slight modifications, and we discuss those changes. We replace the definition of $U(r,\delta)$  with
\begin{align*}
    U(r,\delta) = (18\alpha + 9)S(r,\delta) + \lVert P_T - P_{T}^{r} \rVert_{\mathcal{F}}
\end{align*}
in order to take into account the additional error due to the approximation procedure. We can observe that 
\begin{align*}
    A(Z_{T- r_{i+1}-1},\ldots,Z_T) \leq 4  S(r_i,\delta)
\end{align*}
implies  
\begin{align*}
    \lVert \ep_{T}^{r_i} - \ep_T^{r_{i+1}}\rVert_{\mathcal{F}} \leq 4\alpha S(r_i,\delta)
\end{align*}
since $A$ is an $\alpha$-approximation procedure. We  can follow the steps of Proposition~\ref{prop:iteration-step} with those different constants to show an equivalent statement of this Proposition.
On the other hand, we have that if
\begin{align*}
  A(Z_{t-2r+1}, \ldots, Z_T) > 4 S(r_i,\delta)
\end{align*}
then 
\begin{align*}
    \lVert \ep_T^{r} - \ep_T^{2r} \rVert_{\mathcal{F}} >  4S(r_i,\delta) \enspace, 
\end{align*}
and Proposition~\ref{prop:lower-bound} applies. Therefore, we can use the same proof strategy of Theorem~\ref{thm:main} to prove \Cref{thm:main-approximation}

\end{proof}

\end{document}